\documentclass[mnsc,nonblindrev]{informs3} 
\OneAndAHalfSpacedXI 

\usepackage{tikz}
\usetikzlibrary{arrows.meta,positioning}
\usepackage{natbib}
\usepackage{dsfont}
\usepackage[ruled,vlined]{algorithm2e}
\DontPrintSemicolon
\usepackage[unicode=true,pdfusetitle, bookmarks=true,bookmarksnumbered=true,bookmarksopen=false, breaklinks=true,pdfborder={0 0 0},pdfborderstyle={},backref=page,colorlinks=true,citecolor=blue]{hyperref} \usetikzlibrary{decorations.pathreplacing,calc}

\global\long\def\E{\mathbb{E}}%
\global\long\def\KL{D_{\mathrm{KL}}}
\global\long\def\Reg{\mathrm{Reg}}
\global\long\def\Unif{\mathrm{Unif}}
\global\long\def\P{\mathbb{P}}
\usepackage{endnotes}
\usepackage{graphicx,hyperref,comment,appendix}
\usepackage[capitalize,noabbrev]{cleveref}
\usepackage{tikz}
\usepackage{subfigure,caption,microtype,booktabs} 

\usepackage{amsthm,amsmath,amssymb,soul}
\usepackage{comment,enumitem}
\setlist{leftmargin=*}

\newtheorem{assumption}{Assumption}

\newtheorem{theorem}{Theorem}

\newtheorem{lemma}{Lemma}

\newtheorem{remark}{Remark}

\usepackage{natbib}
\bibpunct[, ]{(}{)}{,}{a}{}{,}%
\def\bibfont{\small}%
\usepackage{bm,comment}
\usepackage{soul}

\ECRepeatTheorems

\EquationsNumberedThrough    

\allowdisplaybreaks

\begin{document}
\RUNAUTHOR{}

\RUNTITLE{On the Peril of (Even a Little) Nonstationarity in Satisficing Regret Minimization}



\TITLE{On the Peril of (Even a Little) Nonstationarity in Satisficing Regret Minimization}

\ARTICLEAUTHORS{

\AUTHOR{Yixuan Zhang}
\AFF{Department of Industrial \& Systems Engineering, 
University of Wisconsin-Madison\\
	\EMAIL{yzhang2554@wisc.edu}} 

\AUTHOR{Ruihao Zhu}
\AFF{SC Johnson College of Business, Cornell University\\
	\EMAIL{ruihao.zhu@cornell.edu}}

    \AUTHOR{Qiaomin Xie}
\AFF{Department of Industrial \& Systems Engineering, 
University of Wisconsin-Madison\\
	\EMAIL{qiaomin.xie@wisc.edu}} 

} 

\ABSTRACT{%
Motivated by the principle of satisficing in decision-making, we study  
satisficing regret guarantees for nonstationary \(K\)-armed bandits. We show that in the general
realizable, piecewise-stationary setting with \(L\) stationary segments, the optimal regret is \(\Theta(L\log T)\) as long as $L\geq 2$. This stands in sharp contrast to the case of $L=1$ (i.e., the stationary setting), where a $T$-independent \(\Theta(1)\) satisficing regret is achievable under realizability. In other words, the optimal regret has to scale with $T$ even if just a little nonstationarity presents. A key ingredient in our analysis is a novel Fano-based framework tailored to nonstationary bandits via a  \emph{post-interaction reference} construction. This framework strictly extends the classical Fano method for passive estimation as well as recent
interactive Fano techniques for stationary bandits. As a complement, we also discuss a special regime in which constant satisficing regret is again possible.
}%


\KEYWORDS{nonstationary bandits, satisficing regret, fano methods, post-interaction reference} 

\maketitle


\section{Introduction}
The multi-armed bandit (MAB) problem is a classical framework for studying the
exploration--exploitation trade-off in sequential decision-making. In the
standard MAB setting, a learner repeatedly chooses among \(K\) arms with unknown,
arm-specific stationary reward distributions, with the goal of minimizing
cumulative regret \citep{robbins1952some,lai1985asymptotically}. For the
canonical MAB problem and its many variants, a rich literature has established
sharp performance guarantees; see, for example, \citet{lattimore2020bandit}.

While identifying the optimal action is essential in some high-stakes settings,
many practical applications only require a ``good-enough,'' or
\emph{satisficing}, decision. Introduced by \citet{simon1956rational},
satisficing aims for an acceptable solution rather than the absolute best one.
This perspective is common in practice. For instance, recommendation and
advertising systems often deploy a candidate only if its KPI exceeds a threshold
\(S\) with sufficient confidence \citep{kohavi2020trustworthy,xu2015infrastructure},
and inventory control typically targets a desired service level (e.g., 90--95\%)
to mitigate stockout costs \citep{fitzsimons2000consumer,jing2011stockouts}.

These considerations motivate the study of \emph{satisficing bandits}
\citep{tamatsukuri2019guaranteed,michel2022regret,rouyer2024understanding,feng2024satisficing,hajiabolhassan2025online},
where the learner aims to pull a \emph{satisficing arm}---an arm whose mean reward
is no less than a prescribed threshold $S$---as often as possible. Performance is
measured by the \emph{satisficing regret}, 
defined as the cumulative
shortfall of the selected arm's mean reward relative to $S$. To the best of our knowledge, existing results on
satisficing regret primarily focus on stationary environments:
\cite{tamatsukuri2019guaranteed,michel2022regret,rouyer2024understanding} study the
$K$-armed bandit setting, \cite{feng2024satisficing} extend the framework to more
general bandit optimization problems, and \cite{hajiabolhassan2025online} study
the average-reward Markov decision process setting. Under a \emph{realizability}
condition—namely, that at least one (initially) unknown arm is strictly satisficing with mean
exceeding $S$—these works show that the satisficing regret can be bounded by a constant
\textit{independent of the decision horizon $T$}. 

In many real-world scenarios, however, stationarity fails: consumer preferences
drift with seasons and trends \citep{ardeshiri2019seasonality}, and platform
marketplaces (e.g., ride-hailing) experience demand surges and changing
competitive conditions \citep{battifarano2019predicting}. As a result, an arm's
reward distribution may evolve over time. Despite the growing literature on
satisficing regret in stationary environments, much less is understood in the
nonstationary setting that frequently arises in practice. Motivated by this gap,
we study satisficing regret minimization under nonstationarity, focusing on
piecewise-stationary bandits with \(L\) stationary segments.
Building on the stationary case (corresponding to \(L=1\)), it is natural to ask:
\begin{center}\emph{Can we similarly achieve a $T$-independent satisficing regret in the piecewise-stationary setting?}\end{center}
At first glance, this seems plausible:   if the change points were known
in advance, one could simply restart a stationary satisficing algorithm on each
segment, leading to an \(\mathcal O(L)\) satisficing regret.



\subsection{Our Contributions}
In this work, however, we provide a negative answer to the above conjecture. Specifically,
\begin{itemize} \item In Section~\ref{sec:general}, we establish an
\(\Omega\bigl(L\log(T/L)\bigr)\) lower bound on the satisficing regret for nonstationary \(K\)-armed bandits. We then propose Algorithm~\ref{alg:1}, which achieves an almost matching upper bound of \(\mathcal{O}(L\log T)\). Together, these results show that the additional $T-$dependent \(\log T\) factor is unavoidable in general when the change points are unknown, thereby ruling out the conjectured \(\mathcal{O}(L)\) satisficing regret guarantee.

\item To establish the lower bounds above, we extend recent \emph{interactive} Fano techniques for bandit regret lower bounds~\citep{chen2024assouad,shufaro2024bits} from stationary to nonstationary environments. More importantly, we introduce a \emph{post-interaction reference} framework that strictly generalizes the classical Fano approach to make it well suited to nonstationary bandits. In this framework, a key distinction compared to the Fano techniques is that part of the (initially) unknown environment parameters (the ``reference'') are revealed to the estimator \emph{after} the entire horizon of length $T$. This leads to two novel complementary  paths in lower bounding the regret (Section~\ref{sec:ourfano1}): (i) reducing a \emph{portion} of the regret to an identification error under a suitable separation condition (used in establishing the lower bound when $L\geq 3$); and (ii) interpreting a \emph{portion} of the regret as an \emph{information cost} (used for \(L=2\)). We also note that this framework may extend to broader classes of nonstationary bandit problems and may be of independent interest.

\item In Section~\ref{sec:specialcase}, we complement the general negative results by identifying a special case in which constant satisficing regret is achievable. Specifically, when there exists an \emph{always-realizable} arm and no satisficing arm ever becomes non-satisficing, following the results in \citet{michel2022regret} can yield a \emph{constant} satisficing regret that is independent of both \(T\) and \(L\).\end{itemize}

\subsection{Related Work}\label{sec:related-work}
\noindent\textbf{Nonstationary Bandits.} 
Most work on nonstationary bandits evaluates performance through
\emph{dynamic regret}, which compares the learner with the best arm at each
round. For piecewise-stationary environments with at most \(L\) segments,
sliding-window and change-point-based methods achieve
\(\widetilde{\mathcal O}(\sqrt{LT})\) dynamic regret
\citep{garivier2011upper,auer2019adaptively}. Under a variation-budget model
with total drift \(D\), the optimal rate is
\(\widetilde{\mathcal O}(D^{1/3}T^{2/3})\)
\citep{besbes2014stochastic,cheung2019learning}, and \citet{wei2021non}
obtain the unified guarantee
\(\widetilde{\mathcal O}(\min\{\sqrt{LT},D^{1/3}T^{2/3}\})\).
These results address a stronger benchmark than ours: dynamic regret competes
with the instantaneous best arm, whereas our objective only requires selecting
an arm whose mean exceeds a fixed threshold.

\noindent\textbf{Stationary Satisficing Regret.} 
Our work is most closely related to the literature on threshold-based
satisficing regret in stationary bandits.
\citet{michel2022regret}, \citet{rouyer2024understanding}, and
\citet{feng2024satisficing} study cumulative satisficing regret under a fixed
threshold and show that, under realizability, constant regret is achievable in
stationary settings. Our paper shows that this phenomenon changes sharply under
nonstationarity: once the environment may change at unknown times, constant
satisficing regret is no longer achievable in general.

\noindent\textbf{Other Satisficing Notions.} 
In stationary settings, threshold-based objectives have also been studied in
pure exploration under the name \emph{thresholding bandits}
\citep{locatelli2016optimal,mukherjee2017thresholding}. There, the goal is to
identify arms above a threshold, with performance measured by error probability
or sample complexity rather than cumulative satisficing regret. Moreover, a
naive explore--then--exploit reduction does not yield optimal cumulative
satisficing regret even in the stationary case
\citep{michel2022regret,feng2024satisficing}.

\citet{russo2018satisficing} introduce a discounted notion of satisficing in
stationary bandits, motivated by time-sensitive learning. Their setting and
benchmark differ from ours: they study discounted regret rather than cumulative
threshold shortfall.

For nonstationary bandits, as highlighted in \cite{liu2023nonstationary,liu2023definition}, the standard notion of dynamic regret can steer algorithm design toward overly aggressive exploration; it can therefore be more meaningful to compete against a \emph{satisficing} benchmark. \cite{min2023information} study nonstationary bandits in a Bayesian model and introduce a notion of satisficing based on competing with a \emph{satisficing action sequence} \(A^\dagger\)~\cite[Sec.~7]{min2023information}. Their ``satisficing regret'' is defined as regret relative to \(A^\dagger\). In contrast, we study a \emph{threshold-based} notion of satisficing regret, defined as the cumulative shortfall from a fixed threshold \(S\). Our formulation does not require that any arm achieves mean exactly \(S\) at each time. Even if one sets \(A^\dagger\) to be a realizable arm sequence, the two regret notions are not directly comparable: regret relative to \(A^\dagger\) can be negative at times when the learner selects an arm with strictly larger mean than \(A_t^\dagger\), whereas our satisficing regret is nonnegative by definition. Finally, \cite{min2023information} establishes asymptotic lower bounds for the classical dynamic regret, while we provide minimax-optimal \emph{finite-time} guarantees for threshold-based satisficing regret.
\subsection{Notation}
For integers \(M\ge 1\), define \([M]:=\{1,2,\ldots,M\}\). For integers \(0\le N\le M\),
define \([N,M]:=\{N,N+1,\ldots,M\}\).
 For a buffer
$\mathcal{B}=(b_1,\ldots,b_m)$ of length $m$, we write $\mathcal{B}[s]:=b_s$ for
its $s$-th entry, $s\in[m]$. For a set $\mathcal A$, we use $|\mathcal A|$ to denote its cardinality, and
$\Unif(\mathcal A)$ for the uniform distribution on $\mathcal A$. For random variables $(X,Y)$ with joint law $P_{X,Y}$, the mutual information is
\begin{equation}\label{eq:mutual}
I(X;Y)
:= \mathbb{E}_{X}\left[
D_{\mathrm{KL}}\bigl(P_{Y\mid X}\,\|\,P_Y\bigr)
\right],
\end{equation}
where $P_{Y\mid X}$ is the conditional law of $Y$ given $X$, $P_Y$ is the
marginal law of $Y$, and $D_{\mathrm{KL}}(\cdot\|\cdot)$ denotes the
Kullback--Leibler (KL) divergence. We also use the conditional mutual
information $I(X;Y\mid Z)
:= \mathbb{E}_{Z}\left[
\mathbb{E}_{X\mid Z}\left[
D_{\mathrm{KL}}\bigl(P_{Y\mid X,Z}\,\|\,P_{Y\mid Z}\bigr)
\right]\right].$
We use the standard Landau notations $o(\cdot)$, $\mathcal{O}(\cdot)$, $\Omega(\cdot)$, $\Theta(\cdot)$ and
$\widetilde{\mathcal{O}}(\cdot)$, as well as the comparison symbols $\lesssim$, $\gtrsim$ and $\asymp$. Unless otherwise stated, these notations hide positive constants that are independent of the horizon \(T\) and the number of stationary segments \(L\) (and \(\widetilde{\mathcal O}(\cdot)\) also suppresses polylogarithmic factors on $T$).

\section{Problem Formulation}
We study a nonstationary \(K\)-armed bandit over a horizon \(T\in\mathbb N\),
with \(K\lesssim T\). At each round \(t\in[T]\), each arm
\(a\in[K]\) has an unknown mean reward \(\mu_t(a)\in[0,1]\).
For integers \(1\le s\le t\le T\), define the number of stationary segments on
\([s,t]\) as
\[
L[s,t]
:= 1+\sum_{j=s}^{t-1}\mathds{1}\left\{\exists a\in[K]  \text{ such that }\ \mu_j(a)\neq \mu_{j+1}(a)\right\}.
\]
Let \(L:=L[1,T]\) denote the total number of stationary segments and $1=T_0<T_1<\cdots<T_L=T+1$
be the associated change-points that partition the horizon into \(L\) stationary
segments, where segment \(\ell\in\{0,\dots,L-1\}\) is the interval
\([T_\ell,\,T_{\ell+1}-1]\). A (possibly randomized) policy \(\pi\) selects an
action \(a_t\in[K]\) at time \(t\), adapted to the natural filtration
\(\mathcal F_{t-1}:=\sigma(a_1,r_1,\dots,a_{t-1},r_{t-1})\). Let \(\Pi\) denote
the class of all such policies. Conditional on \(a_t\), the learner observes a
reward $r_t = \mu_t(a_t) + \xi_t,$ where \((\xi_t)_{t\ge1}\) are independent, zero-mean, \(1\)-sub-Gaussian noises. For each segment \(\ell\in\{0,\dots,L-1\}\), let
\(\mu_{a,\ell}:=\mu_{T_\ell}(a)\) denote the  mean reward of arm
\(a\in[K]\) on segment \(\ell\), and define the segment-wise optimal mean
\(\mu_\ell^* := \max_{a\in[K]} \mu_{a,\ell}\).

The decision maker specifies a threshold \(S\in\mathbb R\), and performance is
judged by whether the chosen arm meets this level. An arm \(a\) is
\emph{satisficing} at time \(t\) if \(\mu_t(a)\ge S\). The one-step satisficing regret is
\((S-\mu_t(a_t))_+ := \max\{0,S-\mu_t(a_t)\}\), and the cumulative
\emph{satisficing regret}  under policy $\pi$ is
\begin{equation}\label{eq:satisficing regret}
R_T^S(\pi;\nu)
:=
\mathbb{E}_{\pi,\nu}\Big[
\sum_{t=1}^T (S-\mu_t(a_t))_+
\Big],
\end{equation}
where \(\nu\) denotes the underlying nonstationary environment  and \(\mathbb E_{\pi,\nu}\) is over the
policy's randomness and the observation noise. Throughout, we focus on the standard realizable case as specified below \citep{michel2022regret,feng2024satisficing}.
\begin{assumption}[Realizability]\label{assumption:fully-satisficing}
For every segment \(\ell\in\{0,\dots,L-1\}\), there exists at least one arm with
mean strictly above the threshold, i.e., \(\mu_\ell^* > S\).
\end{assumption}


\section{A Fano-Based Framework for Nonstationary Bandit Problems
}\label{sec:fano}
In this section, we detail our new technical development based on Fano's lemma~\cite[Theorem~2.10]{reza1994introduction} . We start by reviewing its usage for lower bounds in \emph{stationary} bandits (Section~\ref{sec:ourfano}), and then derive the post-interaction reference framework (Section~\ref{sec:ourfano1}). This yields two powerful approaches that are particularly suitable for deriving
regret lower bounds for \emph{nonstationary} bandits.

Fano's lemma is a classical tool
for establishing minimax lower bounds in multi-hypothesis testing. Let \(\mathcal V\) be a
finite index set, \(\nu\sim\Unif(\mathcal V)\) be a latent instance index,
and \(Z\) be the data generated by an algorithm interacting with the
instance indexed by \(\nu\), Fano's lemma lower bounds the probability of being unable to correctly identify $\nu$ with $Z$ in terms of the mutual information \(I(\nu;Z)\):
\begin{equation}\label{eq:fano}
\inf_{\widehat \nu(Z)} \P\bigl(\widehat \nu(Z) \neq \nu\bigr)
\ge
1-\frac{I(\nu;Z)+\log 2}{\log|\mathcal V|}.
\end{equation}
Being initially designed for multi-hypothesis testing, recent works \citep{chen2024assouad,shufaro2024bits} build on it to develop a general framework for proving regret lower bounds in
\emph{stationary} bandit problems.

\noindent\textbf{Conditional Fano.} Aside from the above, it is often useful to apply Fano's lemma in a structured setting where the index takes the form
\(\nu=(u,b)\in\mathcal U\times\mathcal B\). Here, $b$ is a ``reference'' variable
independent of \(u\). The observation \(Z\) may depend on \(b\), and \(b\) is
revealed to the estimator before it outputs \(\widehat u\).
A conditional form of Fano's inequality~\citep[Theorem~3]{scarlett2019introductory} then
follows directly from~\eqref{eq:fano}:
\begin{equation}\label{eq:fano1}
\inf_{\widehat u(Z,b)} \P\bigl(\widehat u(Z,b) \neq u\bigr)
\ge
1-\frac{I(u;Z\mid b)+\log 2}{\log|\mathcal U|}.
\end{equation}

\subsection{Classical Fano Reduction: Small Regret Implies Correct Identification}
\label{sec:ourfano}
For a bandit problem, we write out \(Z := (a_1,r_1,\dots,a_T,r_T)\) as the full interaction transcript
obtained by running \(\pi\) for \(T\) rounds under the environment indexed by \(\nu\).
Let \(P_\nu\) denote the law of \(Z\) conditional on \(\nu\) and
\(\bar P := \frac{1}{|\mathcal V|}(\sum_{\tau\in\mathcal V} P_\tau)\) be the
resulting mixture distribution.  By definition of mutual information and the
convexity of \(D_{\mathrm{KL}}(P\|\cdot)\) in its second argument
\citep{cover1999elements}, we have
\begin{equation}\label{eq:mutualdecomposition}
I(\nu;Z)
=
\frac{1}{|\mathcal V|}\sum_{\tau\in\mathcal V}
\KL\left(P_\tau \middle\| \bar P\right)
\le
\frac{1}{|\mathcal V|^2}\sum_{\tau,\tau'\in\mathcal V}
\KL\left(P_\tau \middle\| P_{\tau'}\right).
\end{equation}
Combining~\eqref{eq:mutualdecomposition} with~\eqref{eq:fano} yields
\begin{equation}\label{eq:fano-bandit}
\inf_{\widehat \nu(Z)} \P\!\bigl(\widehat \nu(Z) \neq \nu\bigr)
\ge
1-\frac{
\frac{1}{|\mathcal V|^2}\sum_{\tau,\tau'\in\mathcal V}
\KL\left(P_\tau \middle\| P_{\tau'}\right)+\log 2
}{
\log|\mathcal V|
}.
\end{equation}
In stationary bandits, a standard route to regret lower bounds is to
\emph{reduce small regret to accurate identification} of the underlying
instance $\nu$. Specifically, one constructs an instance family
\(\{P_\nu\}_{\nu\in\mathcal V}\) satisfying a \emph{separation condition} (among different arms' mean rewards) at
level \(\Delta\) (see, e.g., \citealt[Eq.~(10)]{chen2024assouad}), which enables the
construction of an estimator \(\widehat \nu(Z)\) from the transcript \(Z\) such
that
\begin{equation}\label{eq:stationary-reduction}
\P\bigl(\widehat \nu(Z)\neq \nu\bigr)
\le
\P\bigl(\Reg_T(\pi,\nu)\ge T\Delta\bigl),
\end{equation}
where \(\Reg_T\) denotes the regret of interest.
By Markov's inequality and~\eqref{eq:fano-bandit}, the regret under the
uniform prior satisfies
\[
\mathbb E\!\left[\Reg_T(\pi,\nu)\right]
\;\ge\;
T\Delta\cdot
\Bigl(
1-\frac{
\frac{1}{|\mathcal V|^2}\sum_{\tau,\tau'\in\mathcal V}
\KL\!\left(P_\tau \middle\| P_{\tau'}\right)+\log 2
}{
\log|\mathcal V|
}
\Bigr),
\]
and one then optimizes over the choice of \(\{P_\nu\}_{\nu\in\mathcal V}\) to establish the lower bound.


\subsection{Post-Interaction Reference and Two Complementary Approaches}
\label{sec:ourfano1}
In nonstationary bandits, however, the reduction~\eqref{eq:stationary-reduction}
typically fails because the canonical separation condition may not hold due to changes in the arms' mean rewards. To overcome this, we introduce the \emph{post-interaction reference} based on the conditional bound~\eqref{eq:fano1}. To the best of our knowledge, our framework is the first systematic application of the conditional Fano's method to establish
lower bounds in (nonstationary) bandits.


\noindent\textbf{Post-Interaction Reference Protocol.}
Draw an environment index \( (u,b) \sim \Unif(\mathcal V)\), where
\(\mathcal V := \mathcal U \times \mathcal B\). Under the prior
\(\Unif(\mathcal V)\), the random variables \(u\) and \(b\) are independent. Under the (unknown) environment \((u,b)\), we let
\(P_{u,b}\) denote the law of \(Z\) conditional on \((u,b)\).
The key feature of the post-interaction reference framework is that \(b\) (the \emph{reference}) is revealed to the
estimator only \emph{after} \(Z\) has been generated. This enables the below two complementary approaches for proving regret lower bounds. 


\noindent\textbf{Approach I: Coordinate-Wise Identification Error.}
Analogous to~\eqref{eq:mutualdecomposition}, conditioning on \(b\) yields
\[
I(u;Z \mid b)
=
\frac{1}{|\mathcal B|}\sum_{x \in \mathcal B}\frac{1}{|\mathcal U|}
\sum_{\tau\in\mathcal U} \KL(P_{\tau,x} \| \bar P_x)
\le
\frac{1}{|\mathcal B||\mathcal U|^2}
\sum_{x \in \mathcal B}\sum_{\tau, \tau'\in\mathcal U}
\KL(P_{\tau,x}\| P_{\tau',x}),
\]
where \(\bar P_x := \frac{1}{|\mathcal U|}\sum_{\tau\in\mathcal U} P_{\tau,x}\).
Plugging this into~\eqref{eq:fano1} gives
\begin{equation}\label{eq:fano1-bandit}
\inf_{\widehat u(Z,b)} \P\bigl(\widehat u(Z,b) \neq u\bigr)
\ge
1-\frac{
\frac{1}{|\mathcal B||\mathcal U|^2}
\sum_{x \in \mathcal B}\sum_{\tau, \tau'\in\mathcal U}
\KL(P_{\tau,x}\| P_{\tau',x})+\log 2
}{
\log|\mathcal U|
}.
\end{equation}
A particularly useful instantiation is when \(\mathcal V\) has a product
structure \(\mathcal V=\mathcal V_1\times\cdots\times\mathcal V_m\). For any
coordinate \(i\in[m]\), we can take
\begin{equation}\label{eq:product}
\mathcal U=\mathcal V_i,
\qquad
\mathcal B=\mathcal V_{-i}
:=\mathcal V_1\times\cdots\times\mathcal V_{i-1}\times
\mathcal V_{i+1}\times\cdots\times\mathcal V_m.
\end{equation}
This coordinate-wise construction is particularly well suited to
nonstationary bandits: each factor \(\mathcal V_i\) can be viewed, roughly, as
an instance family associated with one stationary segment. This structure is
key to obtaining the optimal dependence on the number of stationary segments.
In particular, it yields a coordinate-wise identification bound that can be
paired with a lower bound on a corresponding \emph{partial} regret term:
\begin{equation}\label{eq:fano1-reduction}
\inf_{\widehat u(Z,b)} \P\bigl(\widehat u(Z,b) \neq u\bigr)
\lesssim
\mathbb E\big[\Reg_T^{(i)}(\pi,\nu)\big],
\end{equation}
and the total regret decomposes as
\(\mathbb E[\Reg_T(\pi;\nu)]=\sum_{i=1}^m \mathbb E[\Reg_T^{(i)}(\pi;\nu)]\). 
Throughout this section, the notation $\lesssim$ hides constants that may depend on  $T$ and $\{P_\nu\}_{\nu\in\mathcal V}$.

The intuition for \eqref{eq:fano1-bandit}
and~\eqref{eq:fano1-reduction} is as follows:  for instances that under an appropriate separation
condition on \(\mathcal U\), once the reference $b\in\mathcal{B}$
is revealed, any policy \(\pi\) must incur a nontrivial coordinate-wise
identification error. This lower bound on the identification error can in turn
be converted into a lower bound on the associated partial regret in \eqref{eq:fano1-reduction}. Summing them up and optimizing over
\(\{P_\nu\}_{\nu\in\mathcal V}\) then yields regret lower bounds.
\emph{Approach~I} strictly extends the interactive Fano methods for stationary
bandits outlined in Section~\ref{sec:ourfano}: it allows a broader separation
condition and links identification to \emph{partial} regret via the
post-interaction reference. It underlies our proof of
Theorem~\ref{thm:lower-general}, which establishes the logarithmic lower bound
for \(L \geq 3\).

However, as discussed after Theorem~\ref{thm:lower-general}, when \(L=2\) it
is unclear how to apply \emph{Approach~I}. To overcome this, we develop a
complementary method, again based on the conditional Fano
inequality~\eqref{eq:fano1} and the post-interaction reference.

\noindent\textbf{Approach II: Coordinate-Wise Information Cost.}
We again work under the product structure of \(\mathcal V\), with
\(\mathcal U\) and \(\mathcal B\) chosen as in~\eqref{eq:product} for each
\(i\in[m]\). Conditional on \((\mathcal F_{t-1},b)\), the action \(a_t\) is
drawn from the policy's internal randomness and is therefore independent of
\(u\). Hence \(I(u;a_t\mid \mathcal F_{t-1},b)=0\), and the chain rule gives
\[
I(u;Z\mid b)
=
\sum_{t=1}^T I\bigl(u;a_t\mid \mathcal F_{t-1},b\bigr)
+
\sum_{t=1}^T I\bigl(u;r_t\mid \mathcal F_{t-1},a_t,b\bigr)
=
\sum_{t=1}^T I\bigl(u;r_t\mid \mathcal F_{t-1},a_t,b\bigr).
\]
Consequently,~\eqref{eq:fano1} implies
\begin{equation}\label{eq:fano1-bandit1}
\inf_{\widehat u(Z,b)} \P\!\bigl(\widehat u(Z,b) \neq u\bigr)
\ge
1-\frac{
\sum_{t=1}^T I\bigl(u;r_t\mid \mathcal F_{t-1},a_t,b\bigr)+\log 2
}{
\log|\mathcal U|
}.
\end{equation}
The advantage of~\eqref{eq:fano1-bandit1} is that it yields regret lower bounds
\emph{without} requiring a direct reduction from small (partial) regret to
correct (coordinate-wise) identification, a step needed in Approach~I but one
that may fail in some nonstationary constructions. Instead, we take an
\emph{information-cost} view: we construct \(\{P_\nu\}_{\nu\in\mathcal V}\) and
an estimator \(\widehat u(Z,b)\) such that for some \(\rho\in(0,1)\),
\begin{equation}\label{eq:fano1-reduction1}
\underbrace{
\P\bigl(\widehat u(Z,b) \neq u\bigr)\le \rho
,}_{\text{Identification error given the reference}}
\qquad \underbrace{
\sum_{t=1}^T I\bigl(u; r_t \mid \mathcal F_{t-1}, a_t, b\bigr)
\lesssim
\mathbb E\big[\Reg_T^{(i)}(\pi,\nu)\big]
}_{\text{Coordinate-wise cost of information paid by partial regret}}
.
\end{equation}
An intuitive interpretation of \eqref{eq:fano1-bandit1} and \eqref{eq:fano1-reduction1} is as follows: once the reference is revealed,
any policy \(\pi\) must achieve a nontrivial level of coordinate-wise
identification accuracy. Thus, \eqref{eq:fano1-bandit1} yields a lower bound on
the information that must be acquired. In our construction, acquiring this
information necessarily incurs partial regret. We use this approach to prove
Theorem~\ref{thm:lower-general-1}, which establishes the logarithmic lower
bound for \(L=2\) and complements Theorem~\ref{thm:lower-general}.
\begin{remark}[Comparison with the Classical KL-Based Approach]
Classical KL-based arguments also relate regret to information acquisition, but
they are inherently pairwise: they compare transcript laws under two
alternative environments and convert the resulting KL divergence into a regret
lower bound \citep{bubeck2012regret,garivier2019explore,garivier2011upper}. By
contrast, our construction is genuinely \emph{multi-hypothesis}, via Fano's
inequality, and is therefore better suited to capture the optimal dependence
on the number of hypotheses and hence sharper regret lower bounds. Moreover,
the relevant information measure in our framework is (conditional) mutual
information, which is central to our analysis and can sometimes be linked to
regret more directly than KL divergence. We provide a more detailed discussion of this after 
Theorem~\ref{thm:lower-general-1}.
\end{remark}

\section{Regret Guarantees for Nonstationary Satisficing Bandits}
\label{sec:general}

In this section, we establish optimal satisficing regret guarantees for general nonstationary satisficing bandits. Sections~\ref{sec:general-lower} and \ref{sec:general-upper} present the corresponding lower and upper bounds, respectively.

\subsection{Lower Bounds}
\label{sec:general-lower}

In the stationary setting, optimal satisficing regret is $T$-independent under
realizability~\citep{feng2024satisficing,rouyer2024understanding}.
However, as we show below, such a rate is generally impossible when change points are unknown in advance.

\noindent\textbf{The Hard Instances.} Consider a fixed $\Delta>0$, and let \(\mathcal{E}_{L,\Delta}\) denote the class of two-armed,
piecewise-stationary bandit instances with \(L\) segments, where rewards are
Gaussian with unit variance and the mean vector on each segment \(\ell\) is
either \((S+\Delta,S-\Delta)\) or \((S-\Delta,S+\Delta)\), where we recall $S$ is the satisficing threshold. In particular, the identity of the
unique satisficing arm changes at every change point. Using the post-interaction reference
framework of Section~\ref{sec:ourfano1}, we derive a sharp satisficing regret
lower bound on \(\mathcal E_{L,\Delta}\).
\begin{theorem}
\label{thm:lower-general}
Suppose \(L\ge 3\) and \(\Delta^2 T \ge L\). Then $\inf_{\pi \in \Pi}\sup_{\nu \in \mathcal{E}_{L,\Delta}}R_T^S(\pi;\nu)\gtrsim L\log (\Delta^2T/L)/\Delta.$
\end{theorem}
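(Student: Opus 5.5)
We plan to instantiate Approach~I of Section~\ref{sec:ourfano1}, with a product-structured family whose coordinates encode the unknown locations of change points rather than only the identity of the good arm. Set $m:=\lfloor (L-1)/2\rfloor\ge 1$ (this is where $L\ge 3$ is used) and split the horizon into $m$ disjoint blocks $B_1,\dots,B_m$, each of length $n\asymp T/m$. Inside block $B_i$ we place a ``probe'' structure with two change points. The block starts in some segment where arm~1 has mean $S+\Delta$ and arm~2 has mean $S-\Delta$. At an unknown time $\tau_i$ the means swap, and they swap back at the end of the block. The time $\tau_i$ is drawn from a geometric grid $\mathcal V_i=\{n 2^{-j}: j=1,\dots,J\}$ with $J\asymp\log(\Delta^2 n)$, measured from the block's right end. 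Thus the ``bad'' interval has length $\ell_j:=n2^{-j}\ge 1/\Delta^2$. Any leftover segments are padded at the end so that the total count is exactly $L$; these only add constant factors. Every resulting instance lies in $\mathcal E_{L,\Delta}$, and the index is $\nu=(\tau_1,\dots,\tau_m)\in\mathcal V_1\times\cdots\times\mathcal V_m$.

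\emph{Regret/identification link.} Fix a coordinate $i$, take $u=\tau_i$ and the reference $b=\tau_{-i}$, and define the partial regret $\Reg^{(i)}_T$ as the regret incurred inside $B_i$. The natural estimator $\widehat u(Z,b)$ returns the grid point $\ell_j$ that best matches the empirical pull pattern in $B_i$. A concrete choice is the smallest $j$ such that, on the dyadic window $[\text{end}-\ell_{j-1},\text{end}-\ell_j]$, arm~2 is pulled on at least half the rounds. The separation condition then takes the following form. If $\widehat u\ne u$, the policy either pulled arm~1 on at least half of some window after the true swap, or pulled arm~2 on at least half of some window before it. In either case the regret is at least $\Delta\cdot\ell_{j}/2$ for a window of length comparable to the true $\ell_j$.

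\emph{The main obstacle.} The bound obtained this way is $\Delta\ell_j$, which depends on $j$, so summing naively does not give $\log$. We therefore intend to refine Approach~I by bucketing. For each scale $j$, we view the indicator that the policy ``misidentifies at scale $j$'' as a sub-test. We then apply \eqref{eq:fano1-bandit} conditioned on the reference together with $\{\tau_i\le \ell_j\}$. The KL between two instances in $B_i$ that differ only in the swap at scales $j$ and $j'$ is at most $2\Delta^2$ times the expected number of pulls of the arm that differs on the interval between them. That number is itself bounded by the regret on that interval divided by $\Delta$. Combining this with Fano shows the following for each $i$: either $\mathbb E[\Reg^{(i)}]\gtrsim J/\Delta$ directly through the information term, since each of the $\asymp J$ scales must be resolved at cost $\gtrsim 1/\Delta$ in $\Delta^2\cdot$pulls$\,\gtrsim 1$, or the identification error is constant. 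In the latter case the policy pays $\Delta\ell_j\ge 1/\Delta$ on a constant fraction of the $J$ scales. This per-scale accounting is the delicate step. We expect to need the post-interaction reference so that regret in other blocks cannot be used to argue that the blocks are independent.

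\emph{Conclusion.} Summing over the $m\asymp L$ coordinates gives
$\sup_\nu R_T^S\ge \mathbb E_{\nu\sim\Unif}[\sum_i \Reg^{(i)}_T]\gtrsim m J/\Delta\asymp L\log(\Delta^2T/L)/\Delta$. The condition $\Delta^2T\ge L$ ensures $J\ge 1$.
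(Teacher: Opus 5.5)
There is a genuine gap, and it is in the step you call delicate. First, your KL bound fails for this family. In $\mathcal E_{L,\Delta}$ \emph{both} arms' means flip at every change point. So on the interval where two of your instances disagree, every pull of either arm contributes $2\Delta^2$. The transcript KL is therefore $2\Delta^2|\ell_j-\ell_{j'}|$, the full length of the disagreement interval, whatever regret is incurred there. This is exactly the obstruction the paper describes for KL-based arguments.

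Second, and more fundamentally, a geometric grid carries only $\log J\asymp\log\log(\Delta^2T/L)$ nats per block. The idea that each scale must be resolved at cost $\gtrsim 1/\Delta$ is false. The currently good arm is fully informative about the phase, so checking that no swap occurred at a given scale costs nothing. Concretely, under your prior let $q_t$ be the posterior probability, given $\mathcal F_{t-1}$, that round $t$ lies in a swapped interval, and play arm $2$ iff $q_t>1/2$.
\begin{itemize}
\item This policy's expected regret is $\Delta\sum_t\mathbb E[\min(q_t,1-q_t)]$.
\item By Pinsker, $I(\nu;r_t\mid\mathcal F_{t-1},a_t)\ge c\Delta^2\,\mathbb E[q_t(1-q_t)]$ for an absolute $c>0$ when $\Delta\le 1/2$.
\item Summing and using the chain rule as in Approach~II, its regret is $\lesssim I(\nu;Z)/\Delta\le H(\nu)/\Delta=m\log J/\Delta$.
\end{itemize}
So your family can witness at most order $L\log\log(\Delta^2T/L)/\Delta$, and no bucketing over scales can produce $mJ/\Delta$.

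The missing idea is that you need polynomially many hypotheses per block, with pairwise KL that is uniformly small and independent of their positions. The paper splits the horizon into $(L-1)/2$ blocks of length $2T/(L-1)$. In each block it places a swapped window of \emph{fixed} length $l=\lceil\log n/(16\Delta^2)\rceil$ at one of $n$ disjoint candidate positions, with $nl\le 2T/(L-1)-2$. Two instances that differ only in block $\ell$ disagree on just $2l$ rounds, so their KL is $4\Delta^2 l\le\frac14\log n+1$. The conditional Fano bound, with $\nu_{-\ell}$ as post-interaction reference, then forces identification error $\ge 1/2$.

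Now take the estimator that returns the candidate window with the most arm-$2$ pulls. It is correct whenever the number $M_\ell$ of non-satisficing pulls in block $\ell$ is at most $l/4$. Markov then gives $\mathbb E[M_\ell]\ge l/8\gtrsim\log n/\Delta^2$, i.e., regret $\gtrsim\log n/\Delta$ per block. Lemma~\ref{lem:W} lets you take $\log n\gtrsim\log(\Delta^2T/L)$, and summing over blocks gives the theorem.
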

When $\Delta$ is a constant and $L \ll \Delta^2 T$, Theorem~\ref{thm:lower-general} implies an  
\(\Omega(L\log T)\) lower bound for general nonstationary satisficing bandits.
Theorem~\ref{thm:lower-general} reflects the difficulty incurred by unknown
change points: minimizing satisficing regret is at least as hard as localizing
them, which already incurs a \(T\)-dependent \(\Omega(L\log T)\) cost. The
proof, deferred to Section~\ref{sec:outline-lower-general}, is based on
\emph{Approach~I} introduced in Section~\ref{sec:ourfano1}.

As will become clear in the proof, Theorem~\ref{thm:lower-general} requires
\(L \ge 3\) because the construction~\eqref{eq:construction} relies on a short
stationary segment  \([t_1,t_2]\subseteq[2,T-1]\) with $t_2-t_1 \equiv c$. After the reference is revealed, one can
choose two instances that differ only on two short interior intervals, whose
total length $2c$ is independent of their locations. This yields the separation
condition~\eqref{eq:seperation} and hence the reduction~\eqref{eq:fano1-reduction}.
When \(L=2\), however, there is only one change point. If two instances place
it at different times \(\tau\) and \(\tau'\), then they disagree on the entire
interval between \(\tau\) and \(\tau'\), whose length grows with
\(|\tau-\tau'|\) (Figure~\ref{fig:separation}). It is therefore unclear how
to construct a comparable separated instance family in this case and the possibility of $T$-independent satisficing regret remains.

The next theorem resolves this issue via \emph{Approach~II}, introduced in
Section~\ref{sec:ourfano1}, and establishes the same scaling for all
\(L \ge 2\) under a slightly stronger condition relating \(\Delta\), \(T\),
and \(L\).

\begin{figure}[!htbp]
\centering
\begin{tikzpicture}[
  x=1cm,y=1cm,
  line/.style={line width=0.9pt},
  tick/.style={line width=0.9pt},
  brace/.style={decorate,decoration={brace,amplitude=5pt}},
  lab/.style={font=\small}
]

\def\w{7.2}
\def\gap{0.8}
\def\xL{0}
\def\xR{\w}

\def\yTop{2.6}
\def\yBot{1.6}

\begin{scope}[shift={(0,0)}]

\node[lab, anchor=east] at (-0.2,2.6) {Instance 1};
\node[lab, anchor=east] at (-0.2,1.6) {Instance 2};
\node[font=\small\bfseries] at (3.5,3.2) {$L \geq 3$: Short Disagreement};

\fill[gray!25] (2,\yBot) rectangle (2.8,\yTop);
\fill[gray!25] (5,\yBot) rectangle (5.8,\yTop);

\draw[line, blue] (\xL,\yTop) -- (2,\yTop);
\draw[line, red]  (2,\yTop) -- (2.8,\yTop);
\draw[line, blue] (2.8,\yTop) -- (\xR,\yTop);
\draw[tick] (2,\yTop-0.12) -- (2,\yTop+0.12);
\draw[tick] (2.8,\yTop-0.12) -- (2.8,\yTop+0.12);

\draw[line, blue] (\xL,\yBot) -- (5,\yBot);
\draw[line, red]  (5,\yBot) -- (5.8,\yBot);
\draw[line, blue] (5.8,\yBot) -- (\xR,\yBot);
\draw[tick] (5,\yBot-0.12) -- (5,\yBot+0.12);
\draw[tick] (5.8,\yBot-0.12) -- (5.8,\yBot+0.12);

\draw[dashed] (2,\yBot) -- (2,\yTop);
\draw[dashed] (2.8,\yBot) -- (2.8,\yTop);
\draw[dashed] (5,\yBot) -- (5,\yTop);
\draw[dashed] (5.8,\yBot) -- (5.8,\yTop);

\end{scope}

\begin{scope}[shift={(\w+\gap,0)}]

\node[font=\small\bfseries] at (3.5,3.2) {$L = 2$:  Wide Disagreement};

\fill[gray!25] (1,\yBot) rectangle (5.2,\yTop);

\draw[line, blue] (\xL,\yTop) -- (1,\yTop);
\draw[line, red]  (1,\yTop) -- (\xR,\yTop);
\draw[tick] (1,\yTop-0.12) -- (1,\yTop+0.12);

\draw[line, blue] (\xL,\yBot) -- (5.2,\yBot);
\draw[line, red]  (5.2,\yBot) -- (\xR,\yBot);
\draw[tick] (5.2,\yBot-0.12) -- (5.2,\yBot+0.12);

\draw[dashed] (1,\yBot) -- (1,\yTop);
\draw[dashed] (5.2,\yBot) -- (5.2,\yTop);

\end{scope}

\end{tikzpicture}
\caption{Schematic illustration: the full horizontal line represents the time horizon \(T\), blue/red segments denote the two stationary environments, and the shaded region marks where the two instances disagree.}
\label{fig:separation}
\end{figure}

\begin{theorem}
\label{thm:lower-general-1}
Suppose \(L\ge 2\) and \(\Delta^2 T \ge 13L\). Then $\inf_{\pi \in \Pi}\sup_{\nu \in \mathcal{E}_{L,\Delta}}R_T^S(\pi;\nu)\gtrsim L\log (\Delta^2T/L)/\Delta.$
\end{theorem}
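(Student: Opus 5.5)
The plan is to prove Theorem~\ref{thm:lower-general-1} with \emph{Approach~II}, under the product structure \eqref{eq:product}.

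\textbf{Construction.} Partition $[T]$ into $m:=L-1$ consecutive blocks $B_1,\dots,B_m$, each of length about $T/L$. Each block carries exactly one change point, so every instance has exactly $L$ segments. Set a spacing $d:=\lceil C\log(\Delta^2T/L)/\Delta^2\rceil$ for a constant $C$ to be tuned. Inside block $B_i$, let $\mathcal V_i$ be the set of $M\asymp T/(Ld)$ grid positions of spacing $d$, kept away from the block ends. The instance indexed by $\nu=(u_1,\dots,u_m)$ works as follows.
\begin{itemize}
\item The satisficing arm alternates between arms $1$ and $2$.
\item Its $i$-th flip occurs at position $u_i\in B_i$.
\item The segment means are $(S+\Delta,S-\Delta)$ or $(S-\Delta,S+\Delta)$, so $\nu\in\mathcal E_{L,\Delta}$.
\end{itemize}
Because $\Delta^2T\ge 13L$, one can choose $C$ so that $M\ge 2$ and $\log M\gtrsim\log(\Delta^2T/L)$; I expect this to be exactly where the constant $13$ enters. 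Let $\Reg^{(i)}$ be the satisficing regret accrued on $B_i$, so that $\mathbb E[R_T^S]=\sum_i\mathbb E[\Reg^{(i)}]$. Since the blocks are fixed, the reference $b=u_{-i}$ pins down the environment outside $B_i$, and inside $B_i$ everything except the location of the flip.

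\textbf{Step 1: estimator and identification.} Fix $i$. Given $(Z,b)$, define $\widehat u$ as the candidate $v\in\mathcal V_i$ minimizing $\sum_{t\in B_i}(S-\mu^{(v,b)}_t(a_t))_+$. This is the regret the transcript would have incurred had the flip been at $v$, and it is computable from $(Z,b)$.

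Take any two candidates $v\ne u_i$. On the interval between them, which has at least $d$ rounds, the two hypotheses disagree on which arm is good. Every pull there is therefore bad for exactly one of them, so the two candidate regrets sum to at least $d\Delta$. Hence $\widehat u\neq u_i$ forces $\Reg^{(i)}\ge d\Delta/2$.

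By Markov's inequality, $\P(\widehat u\ne u_i)\le 2\mathbb E[\Reg^{(i)}]/(d\Delta)$. This gives the case split:
\begin{itemize}
\item Either $\mathbb E[\Reg^{(i)}]\ge \rho d\Delta/2\gtrsim \log(\Delta^2T/L)/\Delta$, and we are done for this block.
\item Or the identification error is at most $\rho=1/2$, say.
\end{itemize}
In the second case, \eqref{eq:fano1-bandit1} gives
\[
\sum_t I(u_i;r_t\mid\mathcal F_{t-1},a_t,b)\ \ge\ (1-\rho)\log M-\log 2\ \gtrsim\ \log(\Delta^2T/L).
\]

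\textbf{Step 2: information cost.} I need to show $\sum_t I(u_i;r_t\mid\mathcal F_{t-1},a_t,b)\le 2\Delta\,\mathbb E[\Reg^{(i)}]$. Rounds outside $B_i$ carry zero information, since the means there are determined by $b$.

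For $t\in B_i$, given $(\mathcal F_{t-1},a_t,b)$ the reward is Gaussian with unit variance and mean $\mu_t(a_t)\in\{S\pm\Delta\}$. Compare the conditional law to a Gaussian centred at the posterior mean. This yields
\[
I\le \tfrac12\mathbb E\bigl[\mathrm{Var}(\mu_t(a_t)\mid\mathcal F_{t-1},a_t,b)\bigr]=2\Delta^2\,\mathbb E[p_t(1-p_t)],
\]
where $p_t$ is the posterior probability that $a_t$ is the good arm. Then $\mathbb E[p_t(1-p_t)]\le\mathbb E[1-p_t]=\P(a_t\text{ bad})$, and summing over $t\in B_i$ gives $\Delta\sum_t \P(a_t\text{ bad})=\mathbb E[\Reg^{(i)}]$. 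This is the promised inequality.

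Combining with Step~1, in either case $\mathbb E[\Reg^{(i)}]\gtrsim\log(\Delta^2T/L)/\Delta$. Summing over the $L-1\ge L/2$ blocks and bounding the supremum below by the uniform-prior average yields the claim.

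\textbf{Main obstacle.} The delicate step is Step~2: the variance bound on conditional mutual information must be made rigorous, as an inequality between mutual information and posterior mean-squared error for Gaussian channels with a mixed mean. This must be done carefully given the conditioning on $b$ and $\mathcal F_{t-1}$. If the clean $\tfrac12\mathrm{Var}$ bound resists, I would fall back on $I\le\mathbb E[\KL(P_{r_t\mid u,\cdot}\,\|\,\mathcal N(S+\Delta,1))]$ when $a_t$ is likely good. That bound is nonzero only on the event that the pulled arm's mean is $S-\Delta$ under the true $u$, which directly costs regret $\Delta$ per unit of $2\Delta^2$ information.
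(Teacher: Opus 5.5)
Your proof is correct, but it takes a different route from the paper's. The differences are in the identification step and the construction.

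\textbf{Information cost.} This half matches the paper up to constants. The paper gets $I(\nu_\ell;r_t\mid\mathcal F_{t-1},a_t,\nu_{-\ell})\le 4\Delta^2p$ from convexity of $\KL$ in its second argument. Your bound $2\Delta^2p_t(1-p_t)$ follows in one line from $I(U;Y)\le\E[\KL(P_{Y\mid U}\,\|\,R)]$ for any fixed $R$; take $R=\mathcal N(\bar\mu,1)$. So the step you flag as the main obstacle is not one.

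\textbf{Identification.} The paper builds a reward-based estimator. It scans the length-$l$ windows and returns the first one in which the first $c$ arm-$1$ rewards average at least $S$ (or, in a second variant, the first $l-c$ arm-$2$ rewards average at most $S$), choosing the variant by a fair coin. Lemma~\ref{lem:two-estimators} then gives error at most $3/4$ for \emph{every} policy, with no reference to regret; this is Approach~II in pure form. You instead return the regret-minimizing candidate. On the disagreement interval each pull is bad under exactly one of the two hypotheses, so $\{\widehat u\neq u_i\}\subseteq\{\Reg^{(i)}\ge d\Delta/2\}$, and Markov plus a case split finish. That is an Approach-I reduction paired with the Approach-II information bound.

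\textbf{What each route buys.} Yours needs no concentration and no randomized estimator. It also avoids the care about adaptively collected samples inside windows that Lemma~\ref{lem:two-estimators} requires. It further shows that the ``small regret implies identification'' reduction does work for $L=2$. The difficulty described after Theorem~\ref{thm:lower-general} lies in bounding the information by pairwise KL, which grows with the disagreement length, not in the reduction itself. The paper's route buys a policy-independent identification guarantee, which supports its message that every policy must acquire $\Omega(\log n)$ information per segment.

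\textbf{Construction.} Your $L-1$ blocks, each with one unknown flip of an alternating good arm, make every change point informative. The paper's $L/2$ blocks each restart with arm~$2$ good, so about half the change points sit at known block boundaries. This affects only constants and the odd/even bookkeeping.

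\textbf{Small-grid regime.} In the paper, $\frac14\log n-\log 2>0$ needs $n>16$; likewise your Fano bound is non-vacuous only for $M>4$. So near $\Delta^2T/L=13$ both arguments rely on the $\gtrsim$ to absorb this regime.
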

The full proof of Theorem~\ref{thm:lower-general-1} is deferred to
Section~\ref{sec:proof-lower-general-1}. The theorem also offers a
complementary intuition for the \(\Omega(L\log T)\) scaling: regardless of the
policy, one must acquire \(\Omega(\log T)\) information per segment, since the
reference can be used to reduce the identification error below a constant. The
information-cost bound then implies that acquiring this information
necessarily incurs \(\Omega(\log T)\) satisficing regret per segment.

\noindent\textbf{Challenge with KL-based Arguments.} We also note that it is unclear whether such results can be derived using a 
KL-based argument. To see the difficulty, consider two
instances \(\nu,\nu'\) that differ only on a disagreement region
\(D(\nu,\nu') \subseteq [T]\). Under Gaussian rewards, the per-round KL
divergence at time \(t\in D(\nu,\nu')\) is of order
\[
\KL\!\bigl(\mathcal N(\mu_t^\nu(a),1),\,\mathcal N(\mu_t^{\nu'}(a),1)\bigr)
\asymp
\bigl(\mu_t^\nu(a)-\mu_t^{\nu'}(a)\bigr)^2
\asymp \Delta^2,
\]
so the total transcript KL satisfies $\KL(P_\nu^\pi,P_{\nu'}^\pi)\approx
\Delta^2\,|D(\nu,\nu')|.$ Thus, the KL divergence is driven primarily by the \emph{length} of the
disagreement region. In particular, it does not directly distinguish whether
the samples collected there incur  satisficing regret or merely provide
information.

Our framework overcomes this difficulty by working instead with
\emph{(conditional) mutual information}, which can be upper bounded by the
satisficing regret itself, so that any nontrivial identification task necessarily incurs satisficing regret.

\subsection{Upper Bounds}\label{sec:general-upper}
We now present an algorithm whose satisficing regret matches the lower bounds (up to problem-dependent constants). We begin by defining the windowed
statistics used throughout.

\noindent\textbf{Windowed Statistics.}
For any  buffer $\mathcal B$ of length
$m$, define the dyadic window set $\mathrm{WinSet}(\mathcal B)
:=\{2^k:2^k\le m,\ k\in\mathbb N\}$. For each $w\in\mathrm{WinSet}(\mathcal B)$, define the average of the most
recent $w$ entries in the buffer $\mathcal B$ by $\mathrm{AvgLast}(\mathcal B,w)
:=\frac{1}{w}\sum_{s=m-w+1}^{m}\mathcal B[s].$ Given a confidence radius $\beta(\cdot)$, define the windowed lower and upper
confidence bounds $\mathrm{LCB}^{\operatorname{win}}(\mathcal B)
:=\max_{w\in\mathrm{WinSet}(\mathcal B)}
\big\{\mathrm{AvgLast}(\mathcal B,w)-\beta(w)\big\}$ and $\mathrm{UCB}^{\operatorname{win}}(\mathcal B)
=\min_{w\in\mathrm{WinSet}(\mathcal B)}
\big\{\mathrm{AvgLast}(\mathcal B,w)+\beta(w)\big\}.$

\begin{algorithm}[!htbp]
\caption{\textsc{Nonstationary--Sat} (algorithm for nonstationary satisficing bandits)}
\label{alg:1}
\textbf{Input:} horizon $T$, threshold $S$, arms $[K]$, $\displaystyle \beta(w)\ \gets\ \sqrt{2\big(4 \log T+\log K\big)/w}$ for any $w\ge1.$

\textbf{State:} \textsf{leader}$\gets$\texttt{None}, set epoch  $\kappa_a\gets 1$ for each $a\in[K]$, $\mathcal B_a^{(\kappa_a)}\gets[]$.

\For{$t=1,2,\dots,T$}{
  \If{\textsf{leader}=\texttt{None}}{
     Pull $a_t$ as the next arm in a fixed round–robin over $[K]$, observe $r_t$, append to $\mathcal B_{a_t}^{(\kappa_{a_t})}$.

     \textbf{if} $\mathrm{LCB}^{\operatorname{win}}(\mathcal B_{a_t}^{(\kappa_{a_t})})\geq S$ \textbf{then} \textsf{leader}$\gets a_t$, $\kappa_{a_t}\gets \kappa_{a_t}+1$, $\mathcal B_{a_t}^{(\kappa_{a_t})}\gets[ ]$.
      \textbf{end}
  }
  \Else{
     Pull $a_t \gets$ \textsf{leader}, observe $r_t$, append to $\mathcal B_{a_t}^{(\kappa_{a_t})}$.

     \textbf{if} $\mathrm{UCB}^{\operatorname{win}}\big(\mathcal B_{a_t}^{(\kappa_{a_t})}\big)< S$ \textbf{then} \textsf{leader}$\gets$\texttt{None},  
        $\kappa_{a_t}\gets \kappa_{a_t}+1$, $\mathcal B_{a_t}^{(\kappa_{a_t})}\gets[ ]$. \textbf{end}
  }
}
\end{algorithm}

Algorithm~\ref{alg:1} maintains a $\textsf{leader}$ state. When
\(\textsf{leader}=\texttt{None}\), it \emph{explores} until it certifies that an
arm is satisficing. When a leader is present, the algorithm \emph{exploits} this
arm while continuously \emph{testing} whether it remains satisficing. The key
difficulty in piecewise-stationary environments is that, after a change point,
stale samples can be misleading. The windowed statistics address this issue:
$\mathrm{LCB}^{\operatorname{win}}(\mathcal B)$ takes the most optimistic lower bound over
dyadic suffix windows, while $\mathrm{UCB}^{\operatorname{win}}(\mathcal B)$
takes the most
pessimistic upper bound. Consequently, if an arm's current mean exceeds \(S\),
then eventually 
$\mathrm{LCB}^{\operatorname{win}}(\mathcal B)\ge S$; conversely, if the
current leader becomes non-satisficing, then for a sufficiently recent window we
have 
$\mathrm{UCB}^{\operatorname{win}}(\mathcal B)<S$, triggering a reset and renewed
exploration.

We now state the satisficing regret guarantee for Algorithm~\ref{alg:1}.

\begin{theorem}\label{thm:upper-general} Let $\pi$ be Algorithm \ref{alg:1}. Under Assumption \ref{assumption:fully-satisficing}, we have
\[
R^S_T(\pi;\nu) \lesssim \sum_{\ell=0}^{L-1}\sum_{a:\mu_{a,\ell}<S}\big(\frac{1}{\Delta_{a,\ell}} + \frac{\Delta_{a,\ell}}{(\Delta_{\ell}^{*})^2}\big)\cdot \log T 
\]
where $\Delta_{a,\ell} := (S-\mu_{a,\ell})_+$ and $
\Delta_{\ell}^* := \mu_\ell^* - S.$
\end{theorem}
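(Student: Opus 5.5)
We analyze Algorithm~\ref{alg:1} on a good event and then charge the regret segment by segment. Let $\mathcal G$ be the event that for every arm $a$, every epoch buffer $\mathcal B_a^{(\kappa)}$, every time $t$, and every dyadic window $w\in\mathrm{WinSet}(\mathcal B_a^{(\kappa)})$ whose last $w$ entries lie inside a single stationary segment, we have $|\mathrm{AvgLast}(\mathcal B_a^{(\kappa)},w)-\mu|\le\beta(w)$, where $\mu$ is the common mean. There are at most $KT^2\log T$ triples $(a,t,w)$, and $\beta(w)=\sqrt{2(4\log T+\log K)/w}$. A sub-Gaussian tail bound and a union bound therefore give $\P(\mathcal G^c)\le 1/T$. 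The contribution of $\mathcal G^c$ to the regret is then $\mathcal O(1)$. On $\mathcal G$ we use two monotonicity facts:
\begin{itemize}
\item If a window straddles a change, the bound may fail for that window. However, $\mathrm{LCB}^{\operatorname{win}}$ is a max and $\mathrm{UCB}^{\operatorname{win}}$ is a min over windows, so a single clean recent window suffices to trigger the correct decision.
\item A clean window that straddles no change never produces a false certification or a false reset within a segment.
\end{itemize}

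Fix a segment $\ell$ and consider the exploration and exploitation phases inside it.

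\emph{Exploration.} In round-robin mode, a non-satisficing arm $a$ with gap $\Delta_{a,\ell}$ is pulled at most until some arm $a^\star$ with $\mu_{a^\star,\ell}=S+\Delta_\ell^*$ is certified. Once $a^\star$ has $n$ post-change samples, the largest dyadic window $w\ge n/2$ inside the segment gives $\mathrm{LCB}\ge S+\Delta_\ell^*-2\beta(w)$. This is at least $S$ once $n\gtrsim \log(TK)/(\Delta_\ell^*)^2$. During that time each bad arm accrues regret $\Delta_{a,\ell}\cdot\log T/(\Delta_\ell^*)^2$, which gives the second term. Stale samples from before the change may delay certification, but this delay is absorbed because $a^\star$'s buffer keeps growing.

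\emph{Exploitation.} Suppose the leader is non-satisficing in segment $\ell$, either because it was certified in an earlier segment or, off $\mathcal G$, spuriously. After $n$ post-change pulls, a clean window of length $\ge n/2$ gives $\mathrm{UCB}\le S-\Delta_{a,\ell}+2\beta(w)<S$ once $n\gtrsim\log T/\Delta_{a,\ell}^2$. The regret from this is $\Delta_{a,\ell}\cdot\log T/\Delta_{a,\ell}^2=\log T/\Delta_{a,\ell}$, which gives the first term. On $\mathcal G$, a certified arm is truly satisficing at certification time. It can only become bad at a change point, so each bad (arm, segment) pair is charged at most once in this way.

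We must also control how many exploration restarts occur within a segment. On $\mathcal G$, a reset happens only when the leader's current mean is below $S$. Within segment $\ell$ a satisficing leader is never reset. Therefore segment $\ell$ contains at most one reset, triggered by a stale leader, followed by one exploration phase. Once a satisficing arm is certified, it stays leader until the segment ends. If a satisficing-but-not-strict arm with mean $\ge S$ is certified, it incurs zero regret. Exploration during segment $\ell$ that began in an earlier segment is covered by the same count, because the post-change sample counts restart the analysis. Summing over $\ell$ and over bad arms gives the bound stated in Theorem~\ref{thm:upper-general}.

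The main obstacle is making the stale-sample argument rigorous. We must show that samples collected before the change point, in a buffer that was not reset, cannot delay detection by more than a constant factor. This is exactly why the windows are dyadic and why the extremum is taken over suffix windows. I would handle it by picking the largest power of two not exceeding the number of post-change samples and verifying that $\beta$ at that window already has the required size.
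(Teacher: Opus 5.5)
\paragraph{Verdict.} Your architecture matches the paper's, and the final bound is right, but one step of your argument is false. The architecture is a good event over dyadic suffix windows lying inside one segment, then per-segment charging of exploration to $\Delta_{a,\ell}/(\Delta_\ell^*)^2$ and of bad leadership to $1/\Delta_{a,\ell}$. The false step is your claim that on $\mathcal G$ a certified arm is satisficing when certified, and that a reset occurs only when the leader's current mean is below $S$. From this you conclude that each segment has at most one reset and one exploration phase.

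\paragraph{Why the claim fails.} $\mathcal G$ says nothing about windows that straddle a change. Because $\mathrm{LCB}^{\operatorname{win}}$ is a max and $\mathrm{UCB}^{\operatorname{win}}$ a min over all dyadic suffix windows, one straddling window can trigger a \emph{wrong} decision, just as one clean window triggers a right one. Here is a concrete case, even ignoring noise.
\begin{itemize}
\item Let arm $a$ have mean $S+\delta$ in segment $\ell-1$, and be explored there without certification because another arm is certified first. Say its epoch buffer ends with $2^{k+1}-1$ samples and $\beta(2^k)/\sqrt{2}<\delta<\beta(2^k)$; the buffer is not emptied.
\item In segment $\ell$ its mean drops below $S$. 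When exploration resumes, one new sample opens a window of length $2^{k+1}$.
\item That window's average is still about $S+\delta$, while its radius is $\beta(2^k)/\sqrt 2$. So $\mathrm{LCB}^{\operatorname{win}}\ge S$, and $a$ becomes leader while non-satisficing.
\end{itemize}
Symmetrically, a leader carrying samples from a segment where it was bad can be demoted in a segment where it is satisficing. So a segment can contain on the order of $K$ resets and several interrupted exploration phases. Stale samples \emph{delaying} decisions, which you flag as the main obstacle, is actually the easy part: the clean suffix window of length at least $n/2$ handles it.

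\paragraph{The missing ingredient.} The epoch counter $\kappa_a$ empties an arm's buffer both when it is certified and when it is demoted. This gives two facts within a segment $\ell$:
\begin{itemize}
\item An arm certified during segment $\ell$ has only in-segment samples while it leads. If it is satisficing, it is never demoted before the segment ends; if not, it is demoted within $2n_{\Delta_{a,\ell}}$ pulls.
\item After demotion its buffer is fresh, so it cannot be falsely re-certified within $\ell$.
\end{itemize}
This is the paper's claim that each non-satisficing arm is certified at most once per segment. It gives at most one leader stint per pair $(a,\ell)$, whether inherited from the previous segment or newly certified. Each stint costs $\lesssim \log T/\Delta_{a,\ell}$, however many resets occur.

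\paragraph{Bounding exploration.} You must sum exploration over all phases in the segment, not assume a single phase.
\begin{itemize}
\item The buffer of the best arm $a^\star_\ell$ is untouched while it is not leader. Its in-segment round-robin samples therefore accumulate across phases.
\item If $a^\star_\ell$ entered the segment as leader and was falsely demoted, its buffer is clean from then on.
\item It is certified once it has $2n_{\Delta^*_\ell}$ in-segment samples. So total exploration in segment $\ell$ is at most about $2Kn_{\Delta^*_\ell}$, and each bad arm is explored at most about $2n_{\Delta^*_\ell}$ times there.
\end{itemize}
With these two replacements, your charging yields the stated bound.
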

Specializing to \(\nu\in\mathcal{E}_{L,\Delta}\),
Theorem~\ref{thm:upper-general} yields an \(\mathcal O(L\log T/\Delta)\) upper
bound, matching Theorems~\ref{thm:lower-general} and \ref{thm:lower-general-1} up to
\(T\)-independent factors. Moreover, the dependence
\(\frac{1}{\Delta_{a,\ell}}+\frac{\Delta_{a,\ell}}{(\Delta_\ell^*)^2}\) also
appears in stationary satisficing regret bounds; see, e.g.,
\cite[Theorem~2]{michel2022regret}.

The $\log T$ factor comes from concentration. With
\(\beta(w)\asymp\sqrt{\log T/w}\), certifying \(\mu>S\) (via 
$\mathrm{LCB}^{\operatorname{win}}\ge S$) or rejecting $\mu<S$ (via
$\mathrm{UCB}^{\operatorname{win}}<S$) requires only \(w=\mathcal O(\log T/\Delta^2)\) samples,
where \(\Delta\) is the relevant gap to the threshold. The windowed bounds also
prevent frequent resets: after a reset, if the environment remains unchanged,
then with high probability a satisficing leader is not demoted and a
non-satisficing candidate is not promoted. Thus, within each stationary segment, the algorithm spends
only logarithmically many rounds either finding a satisficing leader or
detecting that the current leader has fallen below $S$; afterward it plays a
satisficing arm and incurs no further regret. The full proof of Theorem~\ref{thm:upper-general} is deferred
to Section~\ref{sec:proof-upper-general}.

\section{When Is $T-$Independent Satisficing Regret Achievable
}\label{sec:specialcase}
It remains unclear which structural conditions permit \emph{$T-$independent}
satisficing regret in nonstationary \(K\)-armed bandits. A common feature of the hard instances behind the \(\Omega(\log T)\) lower bounds
in Section~\ref{sec:general} is the presence of \emph{threshold crossings}: an
arm may be satisficing early on but later \emph{down-cross} below \(S\) at an
unknown time, and the identity of the realizable  arm may
also switch across segments. Such changes force the learner to repeatedly
certify satisficing arms, which incurs a logarithmic cost.

In this section, we identify a regime in which these obstacles disappear.
Specifically, if (i) there exists an \emph{always-realizable} arm that remains
strictly satisficing throughout, and (ii) no arm ever transitions from
satisficing to non-satisficing (i.e., no down-crossings), then constant
satisficing regret—independent of both the horizon \(T\) and the number of
stationary segments \(L\)—is achievable.

Formally, we
impose the following assumptions.

\begin{assumption}[Always-Realizability]
\label{ass:always-realizable}
$\exists a^*\in[K]$ such that $\mu_{a^*,\ell}>S$ for all $\ell\in[0,L-1]$.
\end{assumption}

\begin{assumption}[No Down-Crossing]\label{ass:nodown}
For each arm \(a\in[K]\), the indicator sequence
\(\bigl\{\mathds{1}\{\mu_t(a)\ge S\}\bigr\}_{t\in[T]}\) is nondecreasing in \(t\).
\end{assumption}
Under Assumptions~\ref{ass:always-realizable} and~\ref{ass:nodown},
\textsc{Simple-Sat} (Algorithm~1 in \cite{michel2022regret}), originally designed
for the stationary setting, can be applied directly in the nonstationary case
and achieves constant satisficing regret. Below, \(\widehat{\mu}_a(t)\) denotes
the empirical mean reward of arm \(a\) based on observations collected up to time
\(t\).

\begin{algorithm}[!htbp]
\caption{\textsc{Simple-Sat}~\citep{michel2022regret}}
\label{alg:3}
\textbf{Input:} threshold $S$, arms $[K]$.

For time steps $t = 1,\dots , K$, pull  arm $a_t = t$.

\For{$t=K+1,K+2,\dots$}{
If there exists an arm $a\in[K]$ with $\widehat{\mu}_a(t)\ge S$,
play an arm in $\argmax_{a\in[K]}\widehat{\mu}_a(t)$; otherwise, choose $a_t$
uniformly at random from $[K]$.
}
\end{algorithm}

\begin{theorem}\label{thm:upper-nodown}
Let
$\pi$ be Algorithm~\ref{alg:3}. Under Assumptions~\ref{ass:always-realizable} and~\ref{ass:nodown}, we  have
\begin{align*}
R_T^S(\pi;\nu)
\lesssim \sum_{a:\Delta_{a,\min}>0}\Delta_{a,\max}\Big(1+\frac{1}{\Delta_{a,\min}^2} + \frac{1}{\Delta_*^2}\Big),
\end{align*}
where $\Delta_{a,\min}:=   \min_{\ell\in[0,L-1]} \{\Delta_{a,\ell}:\ \Delta_{a,\ell}>0\}$, $\Delta_{a,\max}:=   \max_{\ell\in[0,L-1]} \{\Delta_{a,\ell}:\ \Delta_{a,\ell}>0\}$ and $\Delta_* := \min_{\ell\in[0,L-1]} (\mu_{a^*,\ell}-S).$
\end{theorem}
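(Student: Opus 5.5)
The plan is to adapt the stationary analysis of \textsc{Simple-Sat} in \citet{michel2022regret}, using Assumptions~\ref{ass:always-realizable} and~\ref{ass:nodown} to show that nonstationarity only helps. We write \(N_a(t)\) for the number of pulls of arm \(a\) before \(t\) and \(\widehat\mu_a(t)\) for the empirical mean of all its samples. The regret is \(\sum_a \E\bigl[\sum_t (S-\mu_t(a))_+\mathds 1\{a_t=a\}\bigr]\), and by Assumption~\ref{ass:nodown} arm \(a\) incurs regret only during an initial prefix \([1,\tau_a)\), where \(\tau_a\) is its up-crossing time. On that prefix every sample has mean \(\le S-\Delta_{a,\min}\), and each regretful pull costs at most \(\Delta_{a,\max}\). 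It therefore suffices to show that the expected number of pulls of \(a\) during \([1,\tau_a)\) is \(\lesssim 1+1/\Delta_{a,\min}^2+1/\Delta_*^2\). All samples of \(a\) in this prefix are independent with means \(\le S-\Delta_{a,\min}\), so \(\widehat\mu_a\) computed from them is stochastically dominated by a stationary empirical mean with mean \(S-\Delta_{a,\min}\). Likewise, all samples of \(a^*\) have means \(\ge S+\Delta_*\), so \(\widehat\mu_{a^*}\) dominates a stationary empirical mean at level \(S+\Delta_*\). These two dominations are the key reduction: the stationary concentration events used by \citet{michel2022regret} remain valid, with the worst-case gaps substituted.

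We split the regretful pulls of \(a\) (at times \(t<\tau_a\), \(t>K\)) into three events. In case (A), \(a\) is played as the argmax while \(\widehat\mu_a(t)\ge S\). In case (B), \(a\) is played as the argmax while \(\widehat\mu_a(t)\ge S\) and \(\widehat\mu_{a^*}(t)\le\widehat\mu_a(t)\). In case (C), no arm has \(\widehat\mu\ge S\) and \(a\) is drawn uniformly.

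For (A), note that \(a\) is pulled at each such time, so \(N_a\) increments. Summing \(\P(\widehat\mu_{a,n}\ge S)\le e^{-n\Delta_{a,\min}^2/2}\) over \(n\) gives \(\lesssim 1/\Delta_{a,\min}^2\). A peeling argument then shows that the count of times \(\widehat\mu_a\) is above \(S\) is controlled by the pull count.

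For (C), note that \(a^*\) is sampled whenever this event occurs; in fact it is pulled with probability \(1/K\) at each such time. The number of occurrences is therefore governed by the number of samples of \(a^*\) needed until its running mean stays \(\ge S\). Using \(\sum_n \P(\widehat\mu_{a^*,n}<S)\lesssim 1/\Delta_*^2\) together with the uniform draw, we expect about \(K/\Delta_*^2\) such times in total, of which \(a\) takes a \(1/K\) share, giving \(\lesssim 1/\Delta_*^2\) per arm.

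Case (B) reduces to (A), since it requires \(\widehat\mu_a\ge S\). One caveat: once \(\widehat\mu_{a^*}\ge S\) holds and stays, \(a^*\) may still lose the argmax to a currently satisficing arm. That costs no regret, but it may freeze \(\widehat\mu_{a^*}\), and this is harmless.

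The main obstacle is the stochastic-domination step for the argmax dynamics. The empirical means pool samples from different segments, and the stopping times at which arms are pulled depend on other arms' histories. We will handle this with the standard ``reward table'' (stack of i.i.d.\ noise per arm) coupling. The \(n\)-th sample of arm \(a\) is \(\mu_{t_n}(a)+\xi_{a,n}\), and on the regretful prefix it is bounded above by \(S-\Delta_{a,\min}+\xi_{a,n}\), irrespective of \(t_n\). This makes the events in (A) and (C) subsets of events about fixed i.i.d.\ sequences, to which the stationary bounds of \citet[Theorem~2]{michel2022regret} apply verbatim. The final step multiplies each count by \(\Delta_{a,\max}\) and adds the \(K\) initial pulls, giving the constant \(1\).
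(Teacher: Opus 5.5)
Your proposal is correct and follows essentially the same route as the paper. Both split the regret into three parts: the $K$ initial pulls; pulls of $a$ as argmax, which force $\widehat\mu_a(t)\ge S$ and are controlled by concentration of $a$'s pre-up-crossing samples via no down-crossing; and uniform draws when every empirical mean is below $S$, which are exchanged for pulls of $a^*$ and controlled via always-realizability. Your case (B) is redundant and no peeling is needed, since each case-(A) time increments $N_a$ and so corresponds to a distinct sample count $n$.
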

Theorem~\ref{thm:upper-nodown} establishes a constant satisficing regret bound
that does not depend on \(T\) or \(L\), and it recovers the stationary guarantee
when \(L=1\); see \cite{michel2022regret}. The key proof idea is that, under
Assumptions~\ref{ass:always-realizable} and~\ref{ass:nodown}, the always-realizable
arm and any non-satisficing arm can be identified in finite expected time
\emph{before} the latter ever becomes satisficing. The full proof is deferred to
Section~\ref{sec:proof-upper-nodown}. We emphasize that Assumptions~\ref{ass:always-realizable} and~\ref{ass:nodown}
are sufficient (but not necessary) for constant satisficing regret. Characterizing
necessary and sufficient structural conditions remains an interesting direction
for future work.

\section{Proofs}


\subsection{Proof of Theorem \ref{thm:lower-general}}\label{sec:outline-lower-general}
We prove Theorem~\ref{thm:lower-general} using Approach~I in
Section~\ref{sec:ourfano1}.

\noindent\textbf{Construct $\{P_\nu\}_{\nu \in \mathcal V}$.} 
Assume \(\xi_t \stackrel{\mathrm{i.i.d.}}{\sim} \mathcal N(0,1)\). For
simplicity, assume that \(L\) is odd and that \(2T/(L-1)\) is an integer; the
even-\(L\) case can be handled similarly by adding an initial change point at
time \(2\). Let $m:=\frac{2T}{L-1}$ and $\mathcal I_\ell:=\{(\ell-1)m+1,\dots,\ell m\}$ for any $\ell\in[(L-1)/2],$ so the horizon is partitioned into \((L-1)/2\) disjoint blocks. In each block
\(\ell\), there is exactly one special window of length \(l\), whose start time
is unknown. Fix \(n,l\ge 1\) with \(nl\le m-2\), and define the candidate
within-block start times by $\mathcal T:=\{2,2+l,2+2l,\dots,2+(n-1)l\},$ so that the windows \([\tau,\tau+l-1]\) are disjoint for distinct
\(\tau\in\mathcal T\). Draw
\[
\nu=(\nu_1,\dots,\nu_{(L-1)/2}),
\qquad
\nu_\ell \stackrel{\mathrm{i.i.d.}}{\sim} \Unif(\mathcal T),
\]
and interpret \(\nu_\ell\) as the start time of the special window in block
\(\ell\), with corresponding absolute start time $\widetilde{\nu}_\ell := (\ell-1)m+\nu_\ell \in \mathcal I_\ell.$ For each block \(\ell\) and each \(t\in\mathcal I_\ell\), set
\begin{equation}\label{eq:construction}
(\mu_t(1),\mu_t(2))
=
\begin{cases}
(S-\Delta, S+\Delta), & t\in [\widetilde\nu_\ell,\widetilde\nu_\ell+l-1],\\
(S+\Delta, S-\Delta), & t\in \mathcal I_\ell\setminus[\widetilde\nu_\ell,\widetilde\nu_\ell+l-1].
\end{cases}
\end{equation}
This construction yields $L$ stationary segments. Let $Z:=(a_1,r_1,\dots,a_T,r_T)$ be the complete interaction transcript, and let $P_\nu$ denote the law of $Z$ conditional on $\nu$. We write $\nu_{-\ell}:=(\nu_1, \dots, \nu_{\ell-1}, \nu_{\ell+1}, \dots, \nu_{(L-1)/2})$ for the collection of all components except the $\ell$-th. A direct calculation
shows that for any $\ell\in[(L-1)/2]$, any fixed $\tau_{-\ell}$,
and any two distinct  $\tau_\ell\neq \tau_\ell'$, we have
\begin{equation}\label{eq:seperation}\KL(P_{\tau_\ell, \tau_{-\ell}} \| P_{\tau'_\ell, \tau_{-\ell}}) = 4\Delta^2l.
\end{equation}Therefore, by \eqref{eq:fano1-bandit},
\begin{align*}
\inf_{\widehat \nu_\ell(Z,\nu_{-\ell})} \P\bigl(\widehat \nu_\ell(Z,\nu_{-\ell}) \neq \nu_\ell\bigr)
&\ge 1-\frac{\frac{1}{n^{(L-3)/2}n^2}\sum_{\tau_{-\ell}}\sum_{\tau_\ell,\tau'_\ell}
\KL(P_{\tau_\ell, \tau_{-\ell}} \| P_{\tau'_\ell, \tau_{-\ell}})+\log 2}{\log n}\\
&\geq 1-\frac{4\Delta^2l+\log 2}{\log n} \geq \frac{1}{2},
\end{align*}
where the last inequality holds by choosing
$l=\big\lceil\frac{\log n}{16\Delta^2}\big\rceil$ and taking $n$ sufficiently
large.

\noindent\textbf{Construct $\widehat \nu_\ell(Z,\nu_{-\ell})$ to lower bound partial regret.} For each  window $\tau\in\mathcal T$ in block $\ell$, construct the counter and the estimator: 
\[
N_2^{(\ell)}(\tau)
:=\sum_{t=(\ell-1)m+\tau}^{(\ell-1)m+\tau+l-1}\mathds 1\{a_t=2\}, \qquad\widehat \nu_\ell(Z,\nu_{-\ell}):=\arg\max_{\tau\in\mathcal T} N_2^{(\ell)}(\tau), 
\] 
with ties broken arbitrarily. Define the blockwise number of non-satisficing pulls: $M_\ell := \sum_{t\in\mathcal I_\ell}\mathds 1\{a_t\neq a_t^*\},$ where $a_t^*$ is the unique satisficing arm at time $t$.
Noting that inside the true  window arm $2$ is satisficing for
all $l$ rounds, if $M_\ell\le \frac14 l$,  then $N_2^{(\ell)}(\nu_\ell)\ \ge\ l-M_\ell\ \ge\ \frac34 l.$ Consequently, $M_\ell\le \frac14 l$ implies $\widehat \nu_\ell(Z,\nu_{-\ell})=\nu_\ell$, and thus
 \begin{align*}
    \frac{1}{2} \leq  \P\bigl(\widehat \nu_\ell(Z,\nu_{-\ell}) \neq \nu_\ell\bigr)
   \le
  \mathbb{P} \bigl(  M_\ell>\tfrac14 l\bigr)&\leq \frac{4\E[M_\ell]}{l}\le 
  \frac{64\Delta^2 \E[M_\ell]}{\log n}.
 \end{align*}
Then, $\E[M_\ell] \geq \frac{\log n}{128\Delta^2}$ for every $\ell \in [(L-1)/2]$. Since each pull of a non-satisficing arm incurs satisficing regret $\Delta$, the partial regret for block $\ell$ is given by $\Delta\E[M_\ell].$ We thus have 
$$\E[R_T^S(\pi, \nu)]=
\sum_{\ell \in [(L-1)/2]}\Delta\E[M_\ell]\ge \frac{(L-1)\log n}{256\Delta}.$$ 

\noindent\textbf{Optimize $n$.} Recall that $nl+2\le 2T/(L-1)$ and $l=\big\lceil \frac{\log n}{16\Delta^2}\big\rceil$.
Ignoring the ceiling, it suffices to choose $n \geq 2$ such that $n\log n \le 16\Delta^2(\frac{2T}{L-1}-2).$ Since $\Delta\le \tfrac12$ and $T\Delta^2\geq L$, we further have $16\Delta^2(\frac{2T}{L-1}-2)> 4.$ The proof of Theorem~\ref{thm:lower-general} is therefore completed by applying Lemma~\ref{lem:W}.

\begin{lemma}\label{lem:W}
Let $y\ge 4$. Then there exists an integer $x\ge 2$ such that   $x\log x \le y$ and $\log x \ge \frac12 \log y.$
\end{lemma}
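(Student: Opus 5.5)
The plan is to exhibit $x$ explicitly as the largest integer below $y/\log y$ (or a simple variant of it), and then check the two inequalities by elementary monotonicity arguments. Natural logarithms are used throughout. The function $g(x)=x\log x$ is increasing on $[1,\infty)$, so the constraint $x\log x\le y$ is an upper bound on $x$. The goal is therefore to find an integer that is not too large, so that $g(x)\le y$, yet still at least $\sqrt{y}$, so that $\log x\ge \tfrac12\log y$.

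The first candidate is $x:=\lfloor \sqrt{y}\rfloor$ made slightly larger. A cleaner choice, however, is $x := \lceil \sqrt{y}\,\rceil$, which trivially gives $\log x\ge \tfrac12\log y$. It then remains to verify $x\log x\le y$. Since $y\ge4$, we have $x\le \sqrt y+1\le \tfrac32\sqrt y$, hence
\[
x\log x\le \tfrac32\sqrt y\,\log\bigl(\tfrac32\sqrt y\bigr).
\]
This must be compared with $y$, which amounts to showing $\tfrac32\log(\tfrac32 s)\le s$ for $s=\sqrt y\ge 2$. This inequality fails near $s=2$: there $\tfrac32\log 3\approx1.65\le 2$ actually holds, but the bound $\sqrt y+1\le\tfrac32\sqrt y$ is loose, and the comparison must be checked across the whole range. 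Carried out carefully, the calculus step is as follows. Let $h(s)=s-\tfrac32\log(\tfrac32 s)$. Then $h'(s)=1-\tfrac{3}{2s}>0$ for $s\ge2$, and $h(2)=2-\tfrac32\log3>0$, so $h>0$ on $[2,\infty)$. Together with $x\ge 2$ (because $\sqrt y\ge 2$), this gives both claims.

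The step I expect to be delicate is exactly this bound $x\le\tfrac32\sqrt y$ combined with the numerical check at the endpoint $y=4$. If the crude estimate turns out to be too lossy, I would instead use the sharper bound $x\le\sqrt y+1$ directly and verify $(s+1)\log(s+1)\le s^2$ for $s\ge2$. At $s=2$ this reads $3\log3\approx3.30\le4$, and the derivative comparison $\log(s+1)+1\le 2s$ holds for $s\ge2$, so monotonicity of the difference finishes the argument.
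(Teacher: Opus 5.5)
Your proof is correct and follows the paper's route: both take $x=\lceil\sqrt y\,\rceil$, get $\log x\ge\frac12\log y$ for free, and reduce $x\log x\le y$ to an elementary inequality. You do this via the real bound $x\le\frac32\sqrt y$ and monotonicity of $s-\frac32\log(\frac32 s)$ on $[2,\infty)$, while the paper uses $x\le\lfloor\sqrt y\rfloor+1$ and the integer inequality $(n+1)\log(n+1)\le n^2$, which is essentially your fallback; the aside claiming the inequality ``fails near $s=2$'' is spurious (your own check $h(2)=2-\frac32\log 3>0$ shows it holds) and should be deleted.
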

By Lemma~\ref{lem:W}, we finish the proof by choosing $n$ such that $\E[R_T^S(\pi, \nu)]
  \gtrsim \frac{L\log (\Delta^2T/L)}{\Delta}.$
\begin{proof}{Proof of Lemma \ref{lem:W}}
Let $x := \lceil \sqrt{y}\rceil$. Then $x\ge \sqrt{y}$, so
$\log x \ge \frac12\log y$. It remains to show $x\log x \le y$. Let $n:=\lfloor \sqrt{y}\rfloor$, so
$n\ge 2$ and $y\in[n^2,(n+1)^2)$, which implies $x=\lceil\sqrt{y}\rceil\leq n+1$.
Thus it suffices to prove 
\begin{equation}\label{eq:key-ineq}
(n+1)\log(n+1) \le n^2 \qquad \text{for all integers } n\ge 2.
\end{equation}
For $n=2$, \eqref{eq:key-ineq} holds since $3\log 3 \le 4$.
For $n\ge 3$, define the function $f(u):=u-1-\log(u+1)$ for $u\ge 3$. Then $f'(u)=1-\frac{1}{u+1}=\frac{u}{u+1}>0,$ so $f$ is increasing on $[3,\infty)$, and
$f(n)\ge f(3)=2-\log 4>0$. Hence $\log(n+1)\le n-1$ for all integers $n\ge 3$,
and therefore $(n+1)\log(n+1) \le (n+1)(n-1) = n^2-1 \le n^2,$ proving \eqref{eq:key-ineq}. Since $y\ge n^2$, we conclude
$x\log x \leq  (n+1)\log(n+1) \le n^2 \le y$. This completes the proof of Lemma \ref{lem:W}.
\end{proof}

\subsection{Proof of Theorem \ref{thm:lower-general-1}}\label{sec:proof-lower-general-1}
We prove Theorem~\ref{thm:lower-general-1} using Approach~II in
Section~\ref{sec:ourfano1}.

\noindent\textbf{Construct $\{P_\nu\}_{\nu \in \mathcal V}$.} Assume \(\xi_t \stackrel{\mathrm{i.i.d.}}{\sim} \mathcal N(0,1)\). For simplicity,
suppose \(L\) is even and \(2T/L\in\mathbb N\). Let $m:=\frac{2T}{L}$ and $\mathcal I_\ell:=\{(\ell-1)m+1,\dots,\ell m\}$ for any $ \ell\in[L/2].$ Then the horizon is partitioned into \(L/2\) disjoint blocks of length \(m\),
each containing one change point. Fix integers \(n,l\ge 1\) with \(nl\le m-1\),
and define $\mathcal T:=\{2,\,2+l,\,\dots,\,2+(n-1)l\},$ so that \([\tau,\tau+l-1]\) and \([\tau',\tau'+l-1]\) are disjoint for
\(\tau\neq\tau'\). Draw
\[
\nu=(\nu_1,\dots,\nu_{L/2}),
\qquad
\nu_\ell \stackrel{\mathrm{i.i.d.}}{\sim} \Unif(\mathcal T),
\]
and interpret \(\nu_\ell\) as the within-block change point in block \(\ell\).
Its absolute location is $\widetilde{\nu}_\ell := (\ell-1)m+\nu_\ell \in \mathcal I_\ell.
$ For each block \(\ell\) and each \(t\in\mathcal I_\ell\), set
\begin{equation}\label{eq:construction1}
(\mu_t(1),\mu_t(2))
=
\begin{cases}
(S-\Delta, S+\Delta), & t\in [(\ell-1)m+1,\widetilde\nu_\ell-1],\\
(S+\Delta, S-\Delta), & t\in [\widetilde\nu_\ell,\ell m].
\end{cases}
\end{equation}
This yields $L$ stationary segments. Let $Z:=(a_1,r_1,\dots,a_T,r_T)$, and let $P_\nu$ denote the law of $Z$ conditional on $\nu$. We write $\nu_{-\ell}:=(\nu_1, \dots, \nu_{\ell-1}, \nu_{\ell+1}, \dots, \nu_{L/2})$ for the collection of all components except the $\ell$-th. Fix \(\ell\in[L/2]\) and \(t\in[T]\), and condition on
\((\mathcal F_{t-1},a_t, \nu_{-\ell})=(f,a,x)\). Define
\[
p := \mathbb{P}(\widetilde \nu_\ell \le t \mid f,a,x)\mathds 1\{a=2\} + \mathbb{P}(\widetilde \nu_\ell > t \mid f,a,x)\mathds 1\{a=1\},
\quad
P_+ := \mathcal N(S+\Delta,1),\quad P_- := \mathcal N(S-\Delta,1).
\]
Given $(f,a,x)$, for any \(t\in\mathcal I_\ell\), the conditional law of $r_t$ under the prior on $\nu_\ell$ is the
mixture $Q := (1-p)P_+ + p P_-$. Moreover, conditionally on $(f,a,x)$, $r_t$ depends on $\nu_\ell$ only through the indicator
$\mathds 1\{\widetilde\nu_\ell\le t\}$. Thus, for any \(t\in\mathcal I_\ell\),
\begin{align*}
I(\nu_\ell;r_t \mid \mathcal F_{t-1}=f, a_t=a, \nu_{-\ell}=x)
&=
I\bigl(\mathds 1\{\widetilde\nu_\ell\le t\}; r_t \mid \mathcal F_{t-1}=f, a_t=a, \nu_{-\ell} = x\bigr)\\
&=
(1-p)\KL\bigl(P_+ \| Q\bigr)
+ p\KL\bigl(P_- \| Q\bigr).
\end{align*}
If \(t\notin \mathcal I_\ell\), then \(r_t\) does not depend on \(\nu_\ell\) given
\((\mathcal F_{t-1},a_t,\nu_{-\ell})\), and therefore
\begin{align*}
I(\nu_\ell;r_t \mid \mathcal F_{t-1}=f, a_t=a, \nu_{-\ell}=x) = 0, \quad \forall t \in [T] \setminus \mathcal I_{\ell}.
\end{align*}
By convexity of $\KL$ in its second argument, $\KL(P_+ \| Q)
\le p\KL(P_+ \| P_-)$ and $\KL(P_- \| Q)
\le (1-p)\KL(P_- \| P_+).$ Thus, for any \(t\in\mathcal I_\ell\),
\begin{align*}
I(\nu_\ell;r_t \mid \mathcal F_{t-1}=f, a_t=a, \nu_{-\ell}=x)
&\le (1-p)p\KL(P_+ \| P_-)
     + p(1-p)\KL(P_- \| P_+) \\
&\le p(\KL(P_+ \| P_-)
           + \KL(P_- \| P_+)) = 4\Delta^2p.
\end{align*}
For any \(t\in\mathcal I_\ell\), taking expectation over $(\mathcal F_{t-1},a_t,\nu_{-\ell})$ yields
\begin{align*}
I(\nu_\ell;r_t \mid \mathcal F_{t-1}, a_t, \nu_{-\ell})
&\le  4\Delta^2 
   \mathbb{E}\Bigl[\mathds 1\{\widetilde \nu_\ell \le t ,a_t=2 \} + \mathds 1\{\widetilde \nu_\ell > t ,a_t=1 \}\Bigr].
\end{align*}
Finally, for any $\ell \in [L/2]$, 
\begin{equation}\label{eq:fanos-always-satisficing}
\sum_{t=1}^T I\bigl(\nu_\ell;r_t\mid \mathcal F_{t-1},a_t,\nu_{-\ell}\bigr) \leq 4\Delta^2\mathbb{E}\Bigl[\sum_{t=(\ell-1)m+1}^{\widetilde\nu_\ell-1} \mathds 1\{a_t=1\}+\sum_{t=\widetilde\nu_\ell}^{\ell m} \mathds 1\{a_t=2\}\Bigr].
\end{equation}

\noindent\textbf{Construct $\widehat \nu_\ell(Z,\nu_{-\ell})$ with upper bounded identification error.} Fix \(\ell\in[L/2]\). For each candidate change-point \(\tau\in\mathcal T\), define $B_{\ell,\tau}
:= \bigl[(\ell-1)m+\tau,\;(\ell-1)m+\tau+l-1\bigr]$ and $N_{\ell,\tau}(Z)
:= \sum_{t\in B_{\ell,\tau}} \mathds 1\{a_t=1\}.$ Fix an integer $c$ with $1\le c\le l$. For each $\tau$ such that $N_{\ell,\tau}(Z)\ge c$,
let $\overline X_{\ell,\tau}(c)$ denote the average of the \emph{first $c$} rewards
of arm $1$ observed within $B_{\ell,\tau}$. For each $\tau$ such that $N_{\ell,\tau}(Z)<c$, let $\overline X'_{\ell,\tau}(l-c)$ denote
the average of the \emph{first $l-c$} rewards of arm $2$ observed within $B_{\ell,\tau}$.
Define the estimators
\begin{align*}
\widehat \nu_\ell'(Z,\nu_{-\ell})
 &:=
\min\Bigl\{\tau\in\mathcal T:\ N_{\ell,\tau}(Z) \ge c
\ \text{ and }\ \overline X_{\ell,\tau}(c)\ge S\Bigr\}\\
\widehat \nu_\ell''(Z,\nu_{-\ell})
 &:=
\min\Bigl\{\tau\in\mathcal T:\ N_{\ell,\tau}(Z) <c
\ \text{ and }\ \overline X_{\ell,\tau}'(l-c)\leq S\Bigr\}
\end{align*}
We next state a lemma that bounds the error probabilities of the two estimators above; its proof is deferred to Section~\ref{sec:proof-two-estimators}.
\begin{lemma}\label{lem:two-estimators}
Fix integers 
$l,c\in\mathbb N$ with $1\le c\le l$, and $\ell \in [L/2]$. Then,
\begin{align*}
\mathbb P(\widehat \nu_\ell'(Z,\nu_{-\ell})\neq \nu_\ell)
 &\le
\mathbb P(N_{\ell,\nu_{\ell}}(Z)< c) + n\exp(-c\Delta^2/2),\\
\mathbb P(\widehat \nu_\ell''(Z,\nu_{-\ell})\neq \nu_\ell)
 &\le
\mathbb P\big(N_{\ell,\nu_{\ell}}(Z)\geq c\big) + n\exp(-(l-c)\Delta^2/2).
\end{align*}
\end{lemma}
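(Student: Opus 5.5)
I would prove the two bounds separately but in parallel, each by a union bound over the candidate windows $B_{\ell,\tau}$, after first pinning down the mean structure inside them. Under construction~\eqref{eq:construction1}, the change point in block $\ell$ is $\widetilde\nu_\ell$. Arm~$1$ has mean $S-\Delta$ before $\widetilde\nu_\ell$ and mean $S+\Delta$ from $\widetilde\nu_\ell$ onward. Arm~$2$ does the reverse. Since the windows $B_{\ell,\tau}$ are disjoint and ordered in $\tau$, this gives three cases:
\begin{itemize}
\item for every $\tau<\nu_\ell$, the whole window $B_{\ell,\tau}$ lies before $\widetilde\nu_\ell$, so arm~$1$ has mean $S-\Delta$ and arm~$2$ has mean $S+\Delta$ there;
\item for $\tau=\nu_\ell$, the window starts exactly at $\widetilde\nu_\ell$ and lies in the post-change segment (using $nl\le m-1$), so arm~$1$ has mean $S+\Delta$ and arm~$2$ has mean $S-\Delta$;
\item windows with $\tau>\nu_\ell$ are irrelevant, because both estimators take a minimum.
\end{itemize}

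\emph{First estimator.} The event $\{\widehat\nu_\ell'\neq\nu_\ell\}$ is contained in the union of two events. The first is that $\nu_\ell$ fails to qualify. The second is that some $\tau<\nu_\ell$ qualifies, i.e.\ $N_{\ell,\tau}\ge c$ and $\overline X_{\ell,\tau}(c)\ge S$. The first event splits further into $\{N_{\ell,\nu_\ell}<c\}$ and $\{N_{\ell,\nu_\ell}\ge c,\ \overline X_{\ell,\nu_\ell}(c)<S\}$. To handle the deviation events I will avoid the adaptivity of the sampling with a standard stack-of-rewards (reward table) coupling. For each arm and each time $t$, pre-draw the noise $\xi_t$. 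The $j$-th observation of arm~$1$ inside window $B_{\ell,\tau}$ is then a Gaussian with a fixed mean on that window, and the first $c$ such observations, when they exist, have an average distributed as $\mathcal N(\mu,1/c)$. More precisely, conditionally on each arrival time, the next reward is a fresh $\mathcal N(\mu,1)$ independent of the past, and optional skipping preserves i.i.d.-ness (Doob). By the Gaussian tail bound, $\P(\overline X\ge S)\le\exp(-c\Delta^2/2)$ when $\mu=S-\Delta$, and symmetrically $\P(\overline X< S)\le\exp(-c\Delta^2/2)$ when $\mu=S+\Delta$. A union over at most $n$ windows, namely the at most $n-1$ windows $\tau<\nu_\ell$ plus the window $\nu_\ell$ itself, gives $\P(N_{\ell,\nu_\ell}<c)+n\exp(-c\Delta^2/2)$.

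\emph{Second estimator.} The argument is symmetric. On $\{N_{\ell,\tau}<c\}$, window $B_{\ell,\tau}$ has at least $l-c+1>l-c$ pulls of arm~$2$, so the average $\overline X'_{\ell,\tau}(l-c)$ is well defined. For the true window, failure means either $N_{\ell,\nu_\ell}\ge c$ or $\overline X'_{\ell,\nu_\ell}(l-c)>S$ while arm~$2$ has mean $S-\Delta$. For a wrong window $\tau<\nu_\ell$, qualifying requires $\overline X'\le S$ while arm~$2$ has mean $S+\Delta$. Each deviation has probability at most $\exp(-(l-c)\Delta^2/2)$, and the union bound gives the claim. There is one edge case: if $c=l$, the bound is trivial since $n\ge1$.

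\emph{Main obstacle.} The delicate step is justifying that the first $c$ (resp.\ $l-c$) adaptively collected rewards in a window are i.i.d.\ Gaussian with the window's mean, despite the policy's adaptive choices and the random $\nu$. I would handle it by conditioning on $\nu$ and using the reward-table construction: for each arm, within the window, assign an i.i.d.\ sequence of noises indexed by pull count. The transcript law is unchanged because only the mean is time-dependent, and it is constant on the window. The averages are then functions of fixed i.i.d.\ variables, so the tail bounds apply without any martingale argument.
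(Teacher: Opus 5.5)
Your proof is correct and follows the paper's argument. You split off $\{N_{\ell,\nu_\ell}<c\}$ (resp.\ $\{N_{\ell,\nu_\ell}\ge c\}$), then take a union bound over the failure of the true window and the qualification of each of the at most $n-1$ earlier windows, bounding each term with a Gaussian tail bound. Your reward-table coupling tightens the paper's step ``$\overline X_{\ell,\nu_\ell}\sim\mathcal N(S+\Delta,1/c)$ on $E^{\mathrm c}$'', a conditional-law claim that is loose under adaptive sampling: the coupling gives the needed containment $\{N\ge c,\ \overline X<S\}\subseteq\{\frac1c\sum_{j\le c}Y_j<S\}$ directly, and you also work out the second bound and the $c=l$ edge case, which the paper leaves as ``analogous.''
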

Construct the final estimator by flipping an independent coin
$\Xi\sim \Unif\{0,1\}$. If $\Xi=0$, set
$\widehat \nu_\ell(Z,\nu_{-\ell}) := \widehat \nu_\ell'(Z,\nu_{-\ell})$; otherwise, set
$\widehat \nu_\ell(Z,\nu_{-\ell}) := \widehat \nu_\ell''(Z,\nu_{-\ell})$.
Combining Lemma~\ref{lem:two-estimators} with the choices
$c=\lceil l/2\rceil$, $l=\big\lceil 6\log(n)/\Delta^2\big\rceil+1$, and $n\ge 2$, we obtain
\begin{equation}\label{eq:perr-upper}
\mathbb P(\widehat \nu_\ell(Z,\nu_{-\ell})\neq \nu_\ell)
\ \le 1/2+  n\exp(-(l-1)\Delta^2/2)
\ \le 1/2+ 1/n^2\leq 3/4,
\end{equation}
Combining \eqref{eq:fanos-always-satisficing} and \eqref{eq:perr-upper}, and using~\eqref{eq:fano1-bandit1}, 
we obtain
\[
\E[R_T^S(\pi; \nu)] = \Delta\cdot \sum_{\ell\in [L/2]}\mathbb{E}\Bigl[\sum_{t=(\ell-1)m+1}^{\widetilde\nu_\ell-1} \mathds 1\{a_t=1\}+\sum_{t=\widetilde\nu_\ell}^{\ell m} \mathds 1\{a_t=2\}\Bigr]
 \ge
\frac{L}{8\Delta}\Bigl(\frac{1}{4}\log n - \log 2\Bigr).
\]

\noindent\textbf{Optimize $n$.} Recall the constraints $nl\le 2T/L-1$, $l=\big\lceil 6\log(n)/\Delta^2\big\rceil+1$, and $n\ge 2$.
Ignoring the ceiling for simplicity, it suffices to choose $n\ge 2$ such that $n\log n \;\le\; \frac{\Delta^2}{6}(\frac{2T}{L}-1).$  Since $\Delta\le \tfrac12$ and $T\Delta^2/L \ge 13$, we further have $\frac{\Delta^2}{6}(\frac{2T}{L}-1)\ge 13/3-1/32 > 4.$ By Lemma~\ref{lem:W}, we may choose $n\ge 2$ so that
$\E\!\left[R_T^S(\pi;\nu)\right]\gtrsim \frac{L}{\Delta}\log(\Delta^2T/L)$.
This completes the proof of Theorem~\ref{thm:lower-general-1}.
\subsubsection{Proof of Lemma \ref{lem:two-estimators}}\label{sec:proof-two-estimators} In this section, we prove the upper bound on $\mathbb P(\widehat \nu_\ell'(Z,\nu_{-\ell})\neq \nu_\ell)$ in Lemma~\ref{lem:two-estimators}; the second bound follows by an analogous argument. Let \(E:=\{N_{\ell,\nu_\ell}(Z)<c\}\). Then
\[
\mathbb P(\widehat \nu_\ell'(Z,\nu_{-\ell})\neq \nu_\ell)
\le
\mathbb P(E)
+\mathbb P(\widehat \nu_\ell'(Z,\nu_{-\ell})\neq \nu_\ell, E^{\mathrm c}).
\]
On the event \(E^{\mathrm c}\), the estimator $\widehat \nu_\ell'(Z,\nu_{-\ell})$ can make error only if either the block at the true change point is not declared, i.e.,
\(\overline X_{\ell,\nu_\ell}<S\); or some earlier candidate is declared first,
i.e., \(\overline X_{\ell,\tau}\geq S\) for some \(\tau<\nu_\ell\) with
\(N_{\ell,\tau}(Z)\ge c\).

For the first event, since \(\overline X_{\ell,\nu_\ell}\sim \mathcal N(S+\Delta,1/c)\)
on \(E^{\mathrm c}\), the Gaussian tail bound gives $\mathbb P(\overline X_{\ell,\nu_\ell}<S)
=
\mathbb P\big(\overline X_{\ell,\nu_\ell}-(S+\Delta)<-\Delta\big)
\le \exp(-c\Delta^2/2).$

Next, fix any \(\tau<\nu_\ell\). On \(E^{\mathrm c}\cap\{N_{\ell,\tau}(Z)\ge c\}\),
we have \(\overline X_{\ell,\tau}\sim \mathcal N(S-\Delta,1/c)\), and hence $\mathbb P\big(\overline X_{\ell,\tau}\geq  S\big)
=
\mathbb P\big(\overline X_{\ell,\tau}-(S-\Delta)\ge \Delta\big)
\le \exp(-c\Delta^2/2).$

Taking a union bound over the at most \(n-1\) indices \(\tau\in\mathcal T\) with
\(\tau<\nu_\ell\), we obtain
\[
\mathbb P(\widehat \nu_\ell'(Z,\nu_{-\ell})\neq \nu_\ell, E^{\mathrm c})
\le (n-1)\exp(-c\Delta^2/2)
+\exp(-c\Delta^2/2)
= n\exp(-c\Delta^2/2).
\]
\subsection{Proof of Theorem \ref{thm:upper-general}}\label{sec:proof-upper-general}
By the definition of the satisficing regret in \eqref{eq:satisficing regret}, we have
\begin{equation}\label{eq:regret-representation}
R_T^S(\pi;\nu)
=
\sum_{\ell=0}^{L-1} \sum_{t=T_\ell}^{T_{\ell+1}-1}
\sum_{a=1}^K \Delta_{a,\ell}\mathbb{P}_{\pi,\nu}(a_t = a)
=
\sum_{\ell=0}^{L-1} \sum_{t=T_\ell}^{T_{\ell+1}-1}
\sum_{a:\mu_{a,\ell} <S}
\Delta_{a,\ell}\mathbb{P}(a_t = a),
\end{equation}
where, in the last equality, we simplify the notation by omitting the dependence on the policy $\pi$ and the environment $\nu$. We first introduce some useful notation. For $\Delta>0$, define
\begin{equation*}
n_\Delta
 := 
\min\Bigl\{n\in[T] : \sqrt{\frac{2(4\log T+\log K)}{n}} \le \Delta/2\Bigr\}
 \le 
c_1 \frac{\log TK}{\Delta^2},
\end{equation*}
where \(c_1>0\) is a universal constant. Consequently, we have
\(\beta(n) \le \Delta/2\) for all \(n\) such that \(n_\Delta\le n \le T\).  Define the event $\mathcal E_{\operatorname{win}}$ that for every arm $a$, epoch $\kappa$,
time $t\le T$, and every dyadic window $w\in\mathrm{WinSet}(\mathcal B_a^{(\kappa)}(t))$
whose last $w$ samples lie entirely inside a stationary segment $\ell$,
we have $\Big|\mathrm{AvgLast}\big(\mathcal B_a^{(\kappa)}(t),w\big)-\mu_{a,\ell}\Big|
\ \le\
\beta(w).$ By Hoeffding's inequality for $1$-sub-Gaussian variables, for fixed $(a,\kappa,t,w)$
whose last $w$ samples lie in some stationary segment $\ell$,
\begin{align*}
\mathbb P\left(\Big|\mathrm{AvgLast}\big(\mathcal B_a^{(\kappa)}(t),w\big)-\mu_{a,\ell}\Big|
\geq  
\beta(w)\right)&\le 
2\exp\big(-\frac{w}{2}\beta(w)^2\big)=
2\exp(-4\log T-\log K)=
\frac{2}{KT^4}.
\end{align*}
Taking a union bound over all $a\in[K]$, epochs $\kappa\le T$, times $t\le T$,
and windows $w\in\mathrm{WinSet}(\mathcal B_a^{(\kappa)}(t))$, and using that
$|\mathrm{WinSet}(\mathcal B_a^{(\kappa)}(t))|\le T$, we obtain $\mathbb P(\mathcal E_{\operatorname{win}}^c) \le \frac{2}{T}.$ On the event $\mathcal E_{\operatorname{win}}$, the windowed confidence bounds behave as follows:
\begin{itemize}
  \item If \(\mu_{a,\ell}=S-\Delta\) and \(\mathcal B_a^{(\kappa)}\) contains a suffix window of length \(w\ge 2n_\Delta\) entirely within segment \(\ell\), then \(\mathrm{UCB}^{\mathrm{win}}(\mathcal B_a^{(\kappa)})\le S\). Hence any leader whose mean stays \(\le S-\Delta\) in an epoch is demoted after at most \(2n_\Delta\) pulls within a single segment.
  \item  Similarly, any arm whose mean stays \(\ge S+\Delta\) in an epoch is certified as satisficing once it accumulates \(2n_\Delta\) pulls within a single segment.
  \item Within a fixed segment \(\ell\), any non-satisficing arm can be certified as leader at most once.
  \item Within a fixed segment \(\ell\), once an optimal (satisficing) arm is certified as leader, it is never demoted before the segment ends.
\end{itemize}
On the event $\mathcal E_{\operatorname{win}}$, the exploration length on each segment $\ell$ is
at most $2K n_{\Delta_{\ell}^*}$: during exploration the algorithm runs a
fixed round–robin schedule, and once the optimal arm on segment $\ell$ has
been pulled $2n_{\Delta_{\ell}^*}$ times, it is certified as satisficing
and exploration terminates. Moreover, since every non-satisficing arm
$a$ can be certified as leader at most once in segment $\ell$ and, once
certified, is demoted after at most $2n_{\Delta_{a,\ell}}$ pulls, the
regret incurred on segment $\ell$ is bounded by $\sum_{a:\mu_{a,\ell}<S}(\frac{1}{\Delta_{a,\ell}} + \frac{\Delta_{a,\ell}}{(\Delta_{\ell}^*)^2}\big)\cdot \log KT \lesssim \sum_{a:\mu_{a,\ell}<S}(\frac{1}{\Delta_{a,\ell}} + \frac{\Delta_{a,\ell}}{(\Delta_{\ell}^*)^2}\big)\cdot \log  T,$ up to a universal constant. Therefore, we have
\begin{align*}
R_T^S(\pi;\nu) &= \mathbb{E}_{\pi,\nu}\Big[ \sum_{t=1}^T (S-\mu_t(a_t))_+ \mid \mathcal E_{\operatorname{win}}\Big]\mathbb P(\mathcal E_{\operatorname{win}}) + \mathbb{E}_{\pi,\nu}\Big[ \sum_{t=1}^T (S-\mu_t(a_t))_+ \mid \mathcal E_{\operatorname{win}}^c\Big]\mathbb P(\mathcal E_{\operatorname{win}}^c)\\
&\lesssim \sum_{\ell=0}^{L-1}\sum_{a:\mu_{a,\ell}<S}(\frac{1}{\Delta_{a,\ell}} + \frac{\Delta_{a,\ell}}{(\Delta_{\ell}^*)^2}\big)\cdot \log  T + T\cdot \frac{2}{T} \lesssim \sum_{\ell=0}^{L-1}\sum_{a:\mu_{a,\ell}<S}(\frac{1}{\Delta_{a,\ell}} + \frac{\Delta_{a,\ell}}{(\Delta_{\ell}^*)^2}\big)\cdot \log  T,
\end{align*}
thereby completing the proof of Theorem \ref{thm:upper-general}.
\subsection{Proof of Theorem \ref{thm:upper-nodown}}\label{sec:proof-upper-nodown}
In this section, we prove Theorem~\ref{thm:upper-nodown} by
adapting the proof of Theorem~1 in~\cite{michel2022regret} from the
stationary  setting to our nonstationary setting under
Assumptions~\ref{ass:always-realizable} and~\ref{ass:nodown}. We work with the same regret representation as in equation~\eqref{eq:regret-representation}: $R_T^S(\pi;\nu)
=
\sum_{\ell=0}^{L-1} \sum_{t=T_\ell}^{T_{\ell+1}-1}
\sum_{a:\mu_{a,\ell} <S}
\Delta_{a,\ell}\mathbb{P}(a_t = a).$  Following the event decomposition used in Appendix~A of~\cite{michel2022regret},
we write, for each arm $a$ and time $t$, $\left\{a_t=a\right\} \subset\{t=a\} \cup\left\{a_t=a, Z_t^c\right\} \cup\left\{a_t=a, Z_t\right\},$ where $Z_t:=\left\{\forall a \in [K], \hat{\mu}_t(a)<S\right\}$. For the first event, we have $\sum_{\ell=0}^{L-1} \sum_{t=T_\ell}^{T_{\ell+1}-1}
\sum_{a:\mu_{a,\ell} < S}
\Delta_{a,\ell}\mathds{1}\{t = a\} \leq \sum_{a:\Delta_{a,\min}>0}\Delta_{a,\max}.$

For each arm $a \in [K] \setminus \{a^*\}$, define $T(a): =\max\{t \in [T]: \mu_t(a)<S\}$. For $n \le T(a)$, let $\hat\mu_{a,n}$ denote the empirical mean of arm $a$
based on the first $n$ pulls of $a$ that occur strictly before time $T(a)$. For the second event, Assumption~\ref{ass:nodown} implies that
\begin{align*}
\sum_{\ell=0}^{L-1} \sum_{t=T_\ell}^{T_{\ell+1}-1}
\sum_{a:\mu_{a,\ell} < S}
\Delta_{a,\ell}\mathbb{P}(a_t=a, Z_t^c) &\leq \sum_{a:\Delta_{a,\min}>0}
\Delta_{a,\max}\sum_{n=1}^{T(a)}\mathbb{P}(\hat\mu_{a,n}(a) \geq S)\\
&\leq \sum_{a:\Delta_{a,\min}>0}
\Delta_{a,\max}\sum_{n=1}^{T(a)}\exp(-\frac{n\Delta_{a,\min}^2}{2})\lesssim \sum_{a:\Delta_{a,\min}>0} \frac{\Delta_{a,\max}}{\Delta_{a,\min}^2}.
\end{align*}
Let $\hat\mu_{a^*,n}$ denote the empirical mean of arm $a^*$
based on the first $n$ pulls of $a^*$. For the last event, a similar argument, together with Assumption~\ref{ass:always-realizable}, implies that
\begin{align*}
\sum_{\ell=0}^{L-1} \sum_{t=T_\ell}^{T_{\ell+1}-1}
\sum_{a:\mu_{a,\ell} < S}
\Delta_{a,\ell}\mathbb{P}(a_t=a, Z_t)&= \sum_{\ell=1}^L \sum_{t=T_\ell}^{T_{\ell+1}-1}
\sum_{a:\mu_{a,\ell} < S}
\Delta_{a,\ell}\mathbb{P}(a_t=a^*, Z_t)\lesssim \sum_{a:\Delta_{a,\min}>0} \frac{\Delta_{a,\max}}{\Delta_{*}^2}.
\end{align*}
Combining the three events, we complete the proof of Theorem \ref{thm:upper-nodown}.

\section{Conclusion}
This paper studies satisficing regret in nonstationary bandits. Our results reveal a sharp contrast with the stationary realizable setting: under nonstationarity, even a single change in the environment is enough to make \(T\)-independent satisficing regret unattainable in general. In particular, in the realizable
piecewise-stationary setting with \(L\) stationary segments, we show that the
  optimal regret scales as \(\Theta(L\log T)\) whenever
\(L\ge 2\). Our lower bounds are based on a new post-interaction reference framework, together with a conditional Fano argument, that strictly extends the interactive Fano methods from stationary bandits to the nonstationary setting.

We also show that \(T\)-independent satisficing regret remains achievable under
additional structural assumptions. Important directions for future work include characterizing
 necessary and sufficient conditions for such constant regret under
nonstationarity and studying whether one can design a single algorithm that adaptively
achieves the optimal rate in both the general and special regimes.
\ACKNOWLEDGMENT{Q. Xie and Y. Zhang are supported in part by National Science Foundation (NSF) grants CNS-1955997
and EPCN-2339794 and EPCN-2432546.}


\bibliography{main} 

\bibliographystyle{informs2014} 




\end{document}